\theoremstyle{plain}
\newtheorem{theorem}{Theorem}[section]
\newtheorem{lemma}[theorem]{Lemma}
\theoremstyle{definition}
\newtheorem{condition}[theorem]{Condition}
\theoremstyle{remark}
\icmltitlerunning{}
\def\T{\top}
\def\Diag{\mathrm{Diag}}
\def\sf{\mathrm{sf}}
\begin{document}
%\onecolumn %CHANGE BACK TO TWOCOLUMN FOR SUBMISSION HERE!!!

\onecolumn
%\twocolumn[ %CHANGE BACK TO TWOCOLUMN FOR SUBMISSION HERE!!!
\icmltitle{A mean teacher algorithm for unlearning of language models}

% It is OKAY to include author information, even for blind
% submissions: the style file will automatically remove it for you
% unless you've provided the [accepted] option to the icml2022
% package.

% List of affiliations: The first argument should be a (short)
% identifier you will use later to specify author affiliations
% Academic affiliations should list Department, University, City, Region, Country
% Industry affiliations should list Company, City, Region, Country

% You can specify symbols, otherwise they are numbered in order.
% Ideally, you should not use this facility. Affiliations will be numbered
% in order of appearance and this is the preferred way.
%\icmlsetsymbol{equal}{*}

\begin{icmlauthorlist}
\icmlauthor{Yegor Klochkov}{bd}
\end{icmlauthorlist}

\icmlaffiliation{bd}{Work done at ByteDance Research}

\icmlcorrespondingauthor{Yegor Klochkov}{eklochov@gmail.com}

% You may provide any keywords that you
% find helpful for describing your paper; these are used to populate
% the "keywords" metadata in the PDF but will not be shown in the document
\icmlkeywords{Machine Learning, ICML}

\vskip 0.3in
% CHANGE BACK TO TWOCOLUMN FOR SUBMISSION HERE!!!

% this must go after the closing bracket ] following \twocolumn[ ...

% This command actually creates the footnote in the first column
% listing the affiliations and the copyright notice.
% The command takes one argument, which is text to display at the start of the footnote.
% The \icmlEqualContribution command is standard text for equal contribution.
% Remove it (just {}) if you do not need this facility.

\printAffiliations{}  % leave blank if no need to mention equal contribution
%\printAffiliationsAndNotice{\icmlEqualContribution} % otherwise use the standard text.

\begin{abstract}
One of the goals of language model unlearning is to reduce memorization of selected text instances while retaining the model's general abilities. Despite various proposed methods, reducing memorization of large datasets without noticeable degradation in model utility remains challenging. In this paper, we investigate the mean teacher algorithm \cite{tarvainen2017mean}, a simple proximal optimization method from continual learning literature that gradually modifies the teacher model. We show that the mean teacher can approximate a trajectory of a slow natural gradient descent (NGD), which inherently seeks low-curvature updates that are less likely to degrade the model utility. While slow NGD can suffer from vanishing gradients, we introduce a new unlearning loss called ``negative log-unlikelihood'' (NLUL) that avoids this problem. We show that the combination of mean teacher and NLUL improves some metrics on the MUSE benchmarks \cite{shi2024muse}.
\end{abstract}

\section{Introduction}

In the recent years, Large Language Models (LLMs) have reached unprecedented capabilities that are achieved through training on massive datasets. Often comprising hundreds of billions of tokens, these datasets are collected from diverse internet sources. This data-intensive approach raises significant ethical concerns as some protected user and copyright data may be compromised. The General Data Protection Regulation (GDPR) establishes fundamental rights for individuals, including the ``right to be forgotten,'' which grants the deletion of personal data upon request. Concurrently, the inclusion of copyrighted content in training datasets has initiated legal quarrels \cite{doe1_github_2022, tremblay_openai_2023}. {These regulatory and legal actions have sparked interest in methods  that would  reduce the influence of  selective  training data on predictions of large pretrained models, a process known as machine unlearning \cite{bourtoule2021machine, ginart2019making}.}

In unlearning of language models, we are aimed at modifying the model weights that would make them behave as if it was never trained on particular selected data, which we refer to as \emph{forget set} \cite{yao2023large,eldan2023s,cooper2024machine,shi2024muse}. For example, we want to suppress certain outputs, like being able to reproduce some memorized training instances word-by-word with completion requests, or answering questions about their content. Following \cite{shi2024muse}, we refer to the completion ability as \emph{verbatim memorization}, and the ability to answer questions as \emph{knowledge memorization}. We therefore want to reduce such memorization, ideally, without decreasing the model's utility.

There have been a lot of interest in the recent literature towards language model unlearning, with various methods proposed. A generally accepted approach consists of finetuning a model by optimizing an objective specifically designed to discourage memorization. Much of research is dedicated to designing specific loss functions that encourage unlearning, with notable examples such as log-likelihood (referred to as gradient ascent) \cite{thudi2022unrolling,yao2023large}, Negative Preference Optimization \cite{zhang2024negative}, and others \cite{chundawat2023can,fan2024simplicity,wang2024llm,li2024wmdp}. With the resulting objective including a regularization term, it is typically optimized with AdamW \cite{loshchilov2017fixing}, which is a standard way of optimizing language models. A few recent works dedicated to unlearning address the question of how we optimize the corresponding objective, \citet{wang2025gru} employ some gradient projections, and \citet{bu2024unlearning} consider adaptive learning rates designed specifically for unlearning.

We revisit a simple method called mean teacher \cite{tarvainen2017mean}, which so far appeared only in the context of continual learning literature. We show that in certain scenarios, it can approximate the trajectory of a slow natural gradient descent. This means that the algorithm performs updates along the low curvature directions, which can help retain the model utility and avoid neural collapse. However, we show that mean teacher can suffer from vanishing gradients with some popular choices of unlearning losses, such as log-likelihood and NPO. Instead, we introduce a very simple loss called Negative Log UnLikelihood (NLUL) that avoids this problem and combines well with mean teacher.

%In this work, we follow the set-up of \cite{shi2024muse}, where memorization is measured in two ways. \emph{Verbatim memorization} is measured by the ability to complete training sequences, when prompted with their prefixes. \emph{Knowledge memorization} is measured by the ability to answer questions about the content of the training text sequences. They introduce two benchmarks MUSE-News/Books, where the goal is to unlearn the model based on some training texts in a way that reduces both verbatim and knowledge memorization, while retaining the knowledge on a separate set of texts. Despite various methods proposed \cite{yao2023large, zhang2024negative, wang2024llm}, they show that reducing knowledge memorization also reduces the utility of the model.

For the experiments, we focus on the MUSE benchmark \cite{shi2024muse}, which contains rather large forget set (3.3M tokens). Although there are methods that are capable of removing verbatim memorization, the reported results so far show it is challenging to reduce knowledge memorization without dramatic reduction in the utility. We show that one of the versions of mean teacher can achieve this. However, we show that in this case the reduction of knowledge memorization is accompanied by reduction of the Massive Multitask Language Understanding metric (MMLU) \cite{hendrycks2020measuring}. This means that the evaluations in the MUSE-benchmark may not be enough for adequate assessment of unlearning algorithms that researchers state reduce knowledge memorization. On the other hand, there are variants that achieve competititve verbatim memorization reduction and utility preservation, while improving on another metrics that are part of the MUSE benchmark associated with risks of privacy leakage.

In our work, we focus exclusively on methods that edit the model weights. We mention that some methods are based on expanding the model decision space through guardrails \citep{thaker2024guardrail, liu2024large} and training low-rank adapters on top of the pretrained model weights \citep{gao2024practical, ji2024reversing}. Unlearning is sometimes seen as a safety mechanism, with focus on reducing potentially harmful knowledge of language models \cite{li2024wmdp}. There, the focus is on removing knowledge of whole topics or concepts. On the other hand, we focus on unlearning independent text instances.

\section{Unlearning set-up and description of the method}

Most unlearning benchmarks assume that we are given an access to two datasets:
\begin{itemize}
    \item \emph{forget set} $D_{f}$ is a set of training instances that we know are the source of knowledge we want to remove;
    \item \emph{retain set} $D_{r}$ is another set of training instances that do not contain undesirable knowledge, which is intended to assist in retaining the model's utility.
\end{itemize}
We acknowledge knowing the source of undesirable knowledge would already be a strong assumption. However, most of the existing benchmarks follow such set up \cite{maini2024tofu, shi2024muse}, see also a review in \cite{thaker2024position}.

Often, unlearning is performed by optimizing an objective of the form,
\begin{equation}\label{unlearning_objective}
    L_{forget}(D_{f}; \theta) + L_{retain}(D_{r}; \theta),
\end{equation}
where $L_{forget}$ averages some unlearning loss on the given examples. That is,
\begin{equation}\label{forumla_for_loss_how}
    L_{forget}(D_{f}; \theta) = \frac{1}{|D_{f}|} \sum_{(x, y) \in D_{f}} \ell(h(x;\theta), y),
\end{equation}
where $h(x; \theta)$ are the logit outputs of the model on the given instance $x$, and $y$ is the target label. One popular choice of unlearning loss, is the \emph{log likelihood} (LL)
\[
    \ell_{LL}(h, y) = \log \sf(h)_y, 
\]
where $\sf(h)_y = e^{h_y} / \left(\sum_j e^{h_j} \right)$ is the softmax function. Such loss is a straightforward negation of the standard learning loss---negative log likelihood (NLL)---and it is intended to discourage correct prediction on the inputs from the forget set. Other unlearning losses include \emph{Negative Preference Optimization} (NPO) \cite{zhang2024negative}; The \emph{Incompetent Teacher} loss outputs KL divergence between the outputs $h(x; \theta)$ and that of a model of smaller size that does not exhibit memorization \cite{chundawat2023can}. In Section~\ref{section_unlearning_losses},  we additionally introduce a new loss called \emph{negative log-unlikelihood}. While NPO is designed to address the problem with explosive gradients when unlearning with LL, our new loss also addresses vanishing gradients at the start of the unlearning process. In Section~\ref{section_unlearning_losses}, we elaborate on this issue in detail, and we also provide some empirical arguments in Section~\ref{section_experiments}.

The penalizing term $L_{retain}$ is designed to maintain the utility of the model, which is usually defined as either NLL loss on the instances from $D_{r}$, or the Kullback-Leibler (KL) divergence with the outputs of the initial model. To be precise, in the latter case the retain loss looks as follows,
\begin{equation}\label{kl_loss}
    L_{retain}(D_r; \theta) =
    \frac{1}{|D_{r}|} \sum_{(x, y) \in D_{r}} KL(\sf(h(x; \theta)) \; \| \; \sf(h(x; \theta_0)))
\end{equation}
The experiments often suggest that KL regularization leads to better results \cite{shi2024muse, yao2023large}.

Everywhere in the paper we assume that the datasets contain  next tokens as prediction labels, i.e., ${D}_{f}$ is formed of pairs $(x,y)$ --- context and next token --- where $ x = (s_{0}, \dots, s_{t-1})$, $y = s_{t}$, and $s$ is a pretraining sequence with $t < |s|$.  Similarly, $D_{r}$ consists of context and next token pairs for sequences from the retain set.

\section{Mean teacher algorithm and it's approximation of natural gradient descent}

When the retain loss as in \eqref{kl_loss} is used, the optimization problem  \eqref{unlearning_objective} can be seen as a proximal optimization problem
\[
    \min_{\theta} L(\theta) + \mathcal{D}(\theta, \theta'),
\]
where we have $L(\theta) = L_{forget}(\theta)$, the reference model $\theta' = \theta_{0}$ is equal to the starting point, and the divergence term is the KL loss on the retain set
\begin{equation}\label{divergence_kl}
    \mathcal{D}(\theta, \theta') = \frac{1}{|D_{r}|} \sum_{(x, y) \in D_{r}} KL(\sf(h(x; \theta)) \; \| \; \sf(h(x; \theta'))).
\end{equation}
The divergence term measures the proximity of the model $\theta$ to the reference $\theta'$. For example, it is non-negative, and in fact, in the case of KL divergence it is also locally quadratic, in the sense that there is a matrix $H(\theta)$, such that
\begin{equation}\label{local_quadratic}
    \mathcal{D}(\theta, \theta') = \frac{1}{2} (\theta - \theta')^{\T} H(\theta') (\theta - \theta') + o(\| \theta - \theta'\|^{2})
\end{equation}
In the case of KL divergence, this matrix corresponds to the Gauss-Newton Hessian (GNH),
\begin{equation}\label{gnh_definition}
    H(\theta) = \frac{1}{|D_{r}|} \sum_{x \in D_{r}} [J_{\theta} h(x; \theta)] S_{h(x; \theta)} [J_{\theta} h(x; \theta)]^{\T},
\end{equation}
where $ J_{\theta} h(\theta) = \left( \frac{\partial h_{j}(\theta)}{\partial \theta_{i}} \right)_{ij} $ is the Jacobian, and $ S_{h} = \Diag(\sf(h)) - \sf(h)\sf(h)^{\T}$. The Gauss-Newton Hessian is semi-positive definite by design, and it is often regarded as an approximation to the Hessian of the NLL loss \cite{martens2020new}. We also note that the formula above is only a particular case of the GNH corresponding to the cross entropy loss, and it can be defined for other losses as well. Furthermore, in this particular case, it is equivalent to the Fisher Information Matrix \cite{schraudolph2002fast, kunstner2019limitations}.

Let us now describe our version of mean teacher algorithm. For the sake of generality, let us assume that we have some loss function $L(\theta)$, that is not necessarily an unlearning loss, and some divergence $\mathcal{D}(\theta, \theta')$ that is not necessarily the KL divergence. The mean teacher works in gradient steps and maintains the model we are optimizing $\theta_{t}$ along with the reference model $\theta_t'$ that is also updated with every iteration as exponentially weighted average of the past models,
\[
     \theta_{t}' = (\kappa\eta) \theta_{t} + (\kappa\eta) (1 - \kappa\eta) \theta_{t - 1} + (\kappa\eta) (1 - \kappa\eta)^{2} \theta_{t -2} + \dots ,
\]
The two models are initialized with the given starting model $\theta_{0}' = \theta_0$, followed by the updates
\begin{enumerate}
    \item $\theta_{t + 1} \leftarrow \theta_t - \eta \nabla \{ \alpha L(\theta_t) + \mathcal{D}(\theta_t, \theta_t')\}$;
    \item $\theta_{t + 1}' \leftarrow (1 - \eta\kappa) \theta_{t}' + \eta\kappa \theta_{t+1}$,
\end{enumerate}
where $\eta $ is the learning rate, and $\kappa$ is the contraction hyperparameter, and we additionally introduce a weight $\alpha < 1$ for the loss term. 
The first step is a gradient descent step on the regularized loss, while the second updates the reference model by slightly sliding it towards the optimized model. Choosing a small weight $\alpha$ can ensure that the optimized model $\theta_{t}$ will stay close to the reference model $\theta_{t}'$, while the sliding reference update ensures that we can still progress away from the initial point $\theta_0$. 

Suppose that the given divergence $\mathcal{D}(\theta, \theta')$ satisfies \eqref{local_quadratic} with some $H(\theta)$. With some abuse of terms, we refer to this matrix as \emph{Hessian}. In addition, we consider adding a quadratic regularization to improve the condition number of the Hessian
\begin{equation}\label{hessian_damping}
    \mathcal{D}_{\lambda}(\theta, \theta'):=  \mathcal{D}(\theta, \theta') + \frac{\lambda}{2} \| \theta - \theta'\|^{2},
    \qquad
    H_{\lambda}(\theta) := H(\theta) + \lambda I.
\end{equation}
Furthermore, we additionally apply momentum accumulation to the gradients \cite{polyak1964some, nemirovskij1983problem}. The pseudo code of the resulting algorithm is summarized in Algorithm~\ref{algo_trust}.

Below we show that for a sufficiently small coefficient $\alpha$, the trajectory of Algorithm~\ref{algo_trust} follows a gradient descent, where the matrix $H(\theta)$ serves as a conditioner. In the case, where $H(\theta)$ is GNH, such trajectory corresponds to \emph{natural gradient descent} \cite{amari1998natural}.

\begin{algorithm}[tb]
   \caption{Mean teacher algorithm}
   \label{algo_trust}
\begin{algorithmic}
   \STATE {\bfseries Input:} Learning rate $\eta$, {damping parameter $\lambda$, momentum $\mu$,} weight $\alpha$, number of steps $T$, contraction parameter $\kappa$, starting parameter~$\theta_0$
   \STATE Initialize reference $\theta_{0}' := \theta_0$.
   \FOR{$t=1$ {\bfseries to} $T$}
   \STATE $\theta_{t} \leftarrow \theta_{t-1} - \eta \nabla_{\theta} \left\{ \alpha L(\theta_{t-1})+\mathcal{D}_{{{\lambda}}}(\theta_{t-1}, \theta_{t-1}')\right\} + \mu(\theta_{t-1} - \theta_{t-2})$
   \STATE $\theta_{t}' \leftarrow (1 - \eta \kappa) \theta_{t-1}' + \eta \kappa \theta_{t}$
   \ENDFOR
   \STATE {\bfseries Return:} $\theta_{T}$
\end{algorithmic}
\end{algorithm}

\begin{theorem}\label{NG_mathcing_theorem}
Set $\gamma := \kappa \alpha \eta / (1 - \kappa \eta)$ and {$\overline{\lambda} := \lambda + (1 - \mu) \kappa / (1 - \eta \kappa)$}. Consider the following updates
\begin{equation}\label{ng_formulation}
    \overline{\theta}_{t + 1} = \overline{\theta}_{t} - \gamma H_{\overline{\lambda}}(\overline{\theta}_{t})^{-1} \nabla L(\overline{\theta}_{t-1}) \,,
    \qquad \overline{\theta}_{0} = \theta_0
\end{equation}
Suppose that 1) $\kappa$ is a positive constant, 2) $\eta$, $\alpha$ are sufficiently small, 3) number of steps $T$ is such that $T \gamma $ is bounded by a constant, 4) $H(\theta)$ is symmetric positive-definite and satisfies Eq.~\eqref{local_quadratic} 5) $\mathcal{D}(\theta, \theta')$, $L(\theta)$, and $H(\theta)$ satisfy some regularity conditions, which we postpone to the appendix (Condition~\ref{regularity_condition}). Then, the trajectory of Algorithm~\ref{algo_trust} approximately matches that of \eqref{ng_formulation}, with the following upper-bound
\[
    \max_{t \leq T} \| \theta_{t} - \overline{\theta}_t \| \leq O(\alpha \log(1/\alpha))\,.
\]
\end{theorem}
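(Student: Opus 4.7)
The plan is a two-timescale analysis of Algorithm~\ref{algo_trust}. I would treat the pair $(v_t,\Delta_t)$ with $v_t := \theta_t-\theta_{t-1}$ and $\Delta_t := \theta_t - \theta_t'$ as \emph{fast} variables that equilibrate on the $1/\eta$ timescale, and $\theta_t'$ as a \emph{slow} variable whose drift, once the fast block has equilibrated, coincides with one step of \eqref{ng_formulation}. The $O(\alpha\log(1/\alpha))$ bound should then arise from a Gronwall-style accumulation of per-step discrepancies over $T\gamma=O(1)$ iterations, with the logarithm produced by an $O(\log(1/\alpha))$-long initial transient that is needed to forget the initialization $\Delta_0=v_0=0$.

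The first step is algebraic. Using Eq.~\eqref{local_quadratic} to expand $\nabla_\theta \mathcal{D}_\lambda(\theta,\theta') = H_\lambda(\theta')(\theta-\theta') + O(\|\theta-\theta'\|^2)$, the two updates of Algorithm~\ref{algo_trust} rearrange into
\begin{align*}
    v_{t+1} &= \mu\, v_t - \eta\alpha\,\nabla L(\theta_t) - \eta\, H_\lambda(\theta_t')\,\Delta_t + O(\alpha^2),\\
    \Delta_{t+1} &= (1-\eta\kappa)(\Delta_t + v_{t+1}).
\end{align*}
Freezing $\theta_t'$ and looking for a fixed point $(v^*,\Delta^*)$, stationarity of $v$ gives $(1-\mu)v^* = -\eta\alpha\,\nabla L(\theta_t') -\eta\, H_\lambda(\theta_t')\Delta^*$, while stationarity of $\Delta$ gives $v^* = \eta\kappa\,\Delta^*/(1-\eta\kappa)$. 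Eliminating $v^*$ recovers exactly the theorem's constants,
\[
    \Delta^*(\theta_t') = -\alpha\, H_{\overline{\lambda}}(\theta_t')^{-1}\nabla L(\theta_t'),
\]
with $\overline{\lambda} = \lambda + (1-\mu)\kappa/(1-\eta\kappa)$; inserting $(v^*,\Delta^*)$ into the reference update $\theta_{t+1}'-\theta_t' = \eta\kappa(\Delta_t+v_{t+1})$ produces $-\gamma H_{\overline{\lambda}}(\theta_t')^{-1}\nabla L(\theta_t')$, i.e.\ one step of \eqref{ng_formulation} read at $\theta_t'$ rather than $\overline{\theta}_{t-1}$.

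The second step is a contraction analysis. Writing $\tilde v_t := v_t - v^*(\theta_t')$ and $\tilde\Delta_t := \Delta_t - \Delta^*(\theta_t')$, the linearized dynamics of $(\tilde v_t,\tilde\Delta_t)$ decouples eigenspace-by-eigenspace of $H_\lambda(\theta_t')$ through the $2\times 2$ matrix
\[
    M_h = \begin{pmatrix} \mu & -\eta h \\ (1-\eta\kappa)\mu & (1-\eta\kappa)(1-\eta h) \end{pmatrix},
\]
whose characteristic polynomial has trace $\mu+(1-\eta\kappa)(1-\eta h)$ and determinant $\mu(1-\eta\kappa)$. A perturbation of the $\eta=0$ roots $\{1,\mu\}$ places both roots inside a disk of radius $1-c\eta$ with $c = c(\kappa,\lambda,\mu)>0$, giving geometric contraction. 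Since $\Delta^*(\theta_t')$ is $O(\alpha)$-Lipschitz in $\theta_t'$ and $\theta_t'$ drifts by $O(\gamma)$ per step, the quasi-steady-state shifts by $O(\alpha\gamma)$ per iteration, so the forced response satisfies $\|\tilde v_t\|+\|\tilde\Delta_t\|=O(\alpha)$ for every $t$, and $O(\alpha^2)$ after $N=O(\eta^{-1}\log(1/\alpha))$ steps of geometric washout.

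The third step assembles the matching bound. For $e_t := \theta_t' - \overline{\theta}_t$, substituting $\Delta_t = \Delta^*(\theta_t')+\tilde\Delta_t$ into the reference update and subtracting \eqref{ng_formulation} yields
\[
    e_{t+1} = e_t - \gamma\bigl[H_{\overline{\lambda}}(\theta_t')^{-1}\nabla L(\theta_t') - H_{\overline{\lambda}}(\overline{\theta}_{t-1})^{-1}\nabla L(\overline{\theta}_{t-1})\bigr] + \eta\kappa(\tilde\Delta_t + \tilde v_{t+1}) + O(\alpha^2).
\]
The bracketed term is Lipschitz in its argument under Condition~\ref{regularity_condition}, so the first two terms give $(I+O(\gamma))e_t + O(\gamma^2)$ after absorbing the one-step lag. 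A discrete Gronwall argument over $T\gamma=O(1)$ steps then contributes $N\cdot O(\eta\kappa\alpha) = O(\kappa\alpha\log(1/\alpha))$ from the transient, $T\cdot O(\eta\kappa\alpha^2)=O(\alpha)$ from the post-transient forcing, and $T\cdot O(\gamma^2) = O(\alpha\eta)$ from the lag; adding $\|\Delta_t\|=O(\alpha)$ to pass from $\theta_t'$ back to $\theta_t$ closes the claimed $O(\alpha\log(1/\alpha))$ bound. The main obstacle I anticipate is making the spectral bound on $M_h$ \emph{uniform in $t$}---this is where the positive-definiteness and Lipschitz assumptions on $H(\theta)$ in Condition~\ref{regularity_condition} become essential---and carefully bookkeeping the nonlinear remainders $R(\theta,\theta')$ through the Gronwall step so that neither the $\alpha^2$ forcing nor the $\log(1/\alpha)$ factor is eaten by amplification.
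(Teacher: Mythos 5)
Your plan follows essentially the same route as the paper: both identify the quasi-steady state $\Delta^* = -\alpha H_{\overline{\lambda}}(\theta')^{-1}\nabla L(\theta')$ and the induced reference-model drift $-\gamma H_{\overline{\lambda}}^{-1}\nabla L$ per step, both establish contraction via the spectral radius of a $2\times 2$ heavy-ball block matrix (your $M_h$; the paper's $A$ in Lemma~\ref{lissa_mom}), and both close with a discrete-Gronwall accumulation over $T\gamma=O(1)$ steps, with the $\log(1/\alpha)$ coming from the $O(\eta^{-1}\log(1/\alpha))$-step burn-in needed to wash out $\Delta_0=v_0=0$. The two obstacles you flag at the end --- uniformity of the contraction while $\theta_t'$ drifts, and bookkeeping of the nonlinear remainders --- are precisely what the paper's two-stage induction (Lemma~\ref{lemma_start_Ut_Dt} followed by the Stage 2 argument maintaining $\|u_t\|=O(\alpha)$, $\|u_t-u_{t-1}\|=O(\eta\alpha^2)$, $\|\theta_{t+1}-\theta_t\|=O(\eta\alpha)$) is built to resolve, so completing those is the bulk of the remaining work rather than a side issue.
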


Note, that the contraction coefficient $\kappa$ effectively increases the damping parameter. This effect can be reduced with the use of momentum. For example, for a typical choice $\mu= 0.9$, we get $\overline{\lambda} \approx \lambda + 0.1 \kappa $. We additionally note that the condition on $T\gamma$ bounds the total sum of step sizes in \eqref{ng_formulation}. The proof is deferred to Section~\ref{ng_proof} in the appendix.

\subsection*{Alternative divergences}

The KL divergence is the most common way to regularize in proximal optimization methods. We additionally consider the {\bf Quadratic KL}(QKL) loss defined as
\begin{align}
    \mathcal{D}(\theta, \theta') &= \frac{1}{|D_{r}|} \sum_{(x, y) \in D_{r}} QKL(h(x; \theta), h(x; \theta')) \label{qkl_def} \\
    QKL(h, h') &:= (h - h')^{\T} (\Diag(\sf(h)) - \sf(h)\sf(h)^{\T})(h - h'), \nonumber
\end{align}
which attains the same approximation as the KL divergence in Eq.~\eqref{local_quadratic}, with GNH in place of $H(\theta)$.
We show in the experiments section that using QKL can sometimes give different results and improve some of the metrics in the benchmarks.

For the sake of example, we also mention that $H(\theta)$ does not always have to be the GNH. If we have a loss function $L(\theta)$ that is convex, we can consider the associated Bregman divergence,
\[
    \mathcal{D}(\theta, \theta') = L(\theta) - L(\theta') - (\theta - \theta')^{\T}\nabla L(\theta'),
\]
which due to Taylor expansion allows quadratic approximation with the original Hessian
\(H(\theta) = \nabla^{2} L(\theta).\) Such divergence was is in \cite{amid2022public} for a private mirror descent algorithm.

\subsection*{Using pretraining data for calculating divergence}

\citet{tarvainen2017mean} introduce mean teacher as a semi-supervised finetuning method with the focus on classification models. The idea is to use a limited labeled set for the loss, and a large set of unlabeled examples for the divergence, which does not depend on the labels. In language model finetuning, all text examples are self-supervised since we always predict the next token. However, the examples we select still undergo some curation. For example, in MUSE-Books the forget contains Harry Potter books, while the retain split contains articles from Harry Potter Fandom Wiki pages. If we are faced with a new unlearning task, we will have to obtain curate a new retain split that matches the topic of a forget set in a similar way. Furthermore, in the unlearning benchmarks, the retain split is often not much larger and sometimes smaller than the forget set. In order to make the set up closer to the semi-supervised setting, we propose to replace the retain split with a subset of pretraining data. For language models, we propose to use the OpenWebText-2, which is readily available on Huggingface\footnote{\url{https://huggingface.co/datasets/Skylion007/openwebtext}}. This dataset has moderate size by modern standards and often included in larger pretraining corpora \cite{gao2020pile}. The dataset comprises of $<$1B tokens and our algorithms only use a fraction of that by sampling an independent batch with each gradient update. Apart from the advantage of having more tokens than a benchmark's retain split, it can also be reused when we are faced with a new unlearning task which reduces the cost of collecting and curating the data.

We additionally note that using pretraining subset $D_{pt}$ instead of $D_{r}$ in Eq.~\eqref{gnh_definition} may be better aligned with preserving general abilities of the model. \citet{ghorbani2019investigation} numerically demonstrated that the subspaces spanned by top  eigenvectors of GNH are aligned with that of Hessian of the cross entropy loss \(H(\theta) = \nabla^{2} L_{CE}(\theta)\). The natural gradients reduce the influence of these directions due to inversion of the Hessian, which therefore ensures that our updates have low curvature w.r.t. the pretraining loss
\[
    L_{CE}(\theta; D_{pt}) = \frac{1}{|D_{pt}|} \sum_{(x, y) \in D_{pt}} \ell_{CE}(h(x, \theta), y).
\]
This can help better preserve the general abilities of the language model that are not directly associated with the unlearning task at hand.
We therefore replace $D_r$ with the pretraining subset $D_{pt}$ in the experiments for two reasons: 1) it is a bigger set than $D_r$ and it does not depend on the unlearning task at hand 2) it can help us better preserve general abilities of the model.

\subsection*{Implementation details}
Finally, notice that Theorem~\ref{NG_mathcing_theorem} assumes full-batch gradient updates. In the experiments, we make the following changes according to standard practice:
\begin{enumerate}
    \item \emph{Batching:} The gradients are calculated on batches of data, i.e., for each update we sample a batch of sequences from $D_{f}$ and a batch $D_{pt}$. The loss and the KL divergence are calculated over that batch.
    \item \emph{Gradient clipping:} we apply norm clipping to the gradients to avoid explosive gradients. Note, since that clipping is equivalent to reducing the learning rate value, we correspondingly reduce the contracting parameter $\kappa \leftarrow \kappa / \max(\|g\| / c, 1)$, where $c$ is the clipping coefficient.
    \item \emph{Momentum:} the gradients are fed into the momentum SGD optimizer.
\end{enumerate}
For reference, we provide the pseudo-code for this version of the algorithm in the appendix, {}{Section~\ref{section_batched}.}

%Later, first-order momentum methods were popularized by \citet{sutskever2013importance}.

\section{Unlearning losses}\label{section_unlearning_losses}
Here we describe three popular unlearning losses, in addition to one which we introduce in this paper. The given objectives are to be minimized.

\begin{enumerate}
    \item \emph{Log likelihood (LL)} is a straightforward negation of the standard learning loss --- negative log likelihood (NLL),
    \[
        \ell_{LL}(h; y) = \log \sf(h)_y,
    \]
    which is also sometimes referred to as gradient ascent, since it negates the gradients that would be used in learning with NLL.
    Using this loss for unlearning has two caveats: firstly, we typically start from a point where the we are pretty much converged on $D_{forget}$ due to memorization, and it is hard to escape this point, unless we are using an accelerated optimization method such as AdamW. However, this does not solve the second problem: we will observe  explosive gradients the further away we get from the minima, which can eventually lead to a collapse of the model.
    \item \emph{Negative preference optimization (NPO)} is a modification of direct preference optimization loss (DPO), that only includes the negative examples, which are the ones sampled from the forget set,
    \[
        \ell_{NPO_{\beta}}(s; \theta) = -\frac{2}{\beta} \log \sigma \left( - {\beta} \log \frac{\pi_{\theta}(s)}{\pi_{base}(s)} \right)
    \]
    \cite{zhang2024negative} introduce this loss to address the problem with explosive gradients. 
    In particular, they observe that the gradients of this loss correspond to reweighed gradients of the log-likelihood,
    \begin{equation*}
        \frac{1}{|s|} \nabla_\theta \ell_{NPO_{\beta}}(s; \theta)
        =\frac{\pi(s)^{\beta}}{\pi_{\theta}(s)^{\beta} + \pi_{base}(s)^{\beta}}  \frac{1}{|s|} \sum_{(x, y) \in s} \nabla_{\theta} \ell_{LL}(h(x, \theta); y) \,.
    \end{equation*}
    Here, the weight will approach zero, as the probabilities $\pi_{\theta}(s) $ on the forget set reduce. The term $\pi_{base}(s)$ is the probability of the forget token according to the starting model, which we denote as ``base'' to avoid confusion with the reference model that we update according to the mean teacher rule. This means that we have to additionally store and forward propagate through one more model, which moderately increases the cost of unlearning. To avoid confusion, we remark that this loss does not conform with the formula Eq.~\ref{forumla_for_loss_how} as the the weight is calculated per sequence.

    \item \emph{Incompetent teacher} (IT) is a simple and intriguing approach that was introduced in \cite{chundawat2023can}, but appears have been overlooked in the LLM unlearning literature. They propose to minimize the KL divergence between outputs of the optimized model and that of a fixed smaller model that potentially does not exhibit memorization --- the incompetent teacher. For example, if we unlearn a LLama-2 7B model, we can ``push'' its outputs towards that of the TinyLlama~1.1B on the forget data, rather than maximizing the NLL. The loss in that case looks as follows,
    \[
        \ell_{IT}(h; y) = KL\left( \sf(h) \; \| \; \sf(h_{it}) \right)
    \]
    where $h_{it}$ are the logit outputs of the incompetent teacher. KL divergence, being convex w.r.t. the logits $h$, is better suited to combine with popular accelerated optimization techniques, such as AdamW. Notice that the mismatch loss from \cite{yao2023large} is equivalent to incompetent teacher choosing uniformly random tokens.
    
    \item \emph{Negative log-unlikelihood (NLUL)} is another candidate that we introduce in this paper, and it looks as follows
    \[
        \ell_{NLUL}(h; y) = - \log (1 - \sf(h)_y)
    \]
    The idea is similar to the LL approach, but instead of minimizing the likelihood, we maximize the log-unlikelihood $\log (1 - p(y))$. Similar to NPO, such reformulation allows a better reweighing of the gradients per token
    \[
        \nabla_h \ell_{NLUL}(h; y) = \frac{\sf(h)_y}{1 - \sf(h)_y} \nabla_{h} \ell_{LL}(h; y)\,.
    \]
    It has a similar effect when $p(y) \approx 0$, helping to avoid explosive gradients. However, it additionally allows to escape the stagnating starting point by making more aggressive updates in the beginning, when we often have $p(y) \approx 1$ in the forget set. It does not introduce additional hyperparameters (such as $\beta$ in NPO) and does not require to maintain an additional base model. In the experiments, we find this loss combines best with MT.
\end{enumerate}

\section{Experiments}\label{section_experiments}

\subsection{Benchmarks}

We use two benchmarks from \cite{shi2024muse}, MUSE-News and MUSE-Books. In both cases, the authors provide us with a \emph{forget set}, a \emph{retain set}, a set of queries and ground truth labels for testing memorization and utility, and a \emph{target model} to be unlearned. MUSE-News is based on BBC news articles collected after the release date of Llama-2. These articles are split into forget and retrain sets randomly. Then, they finetune a Llama-2 7B on both sets, so that it memorizes both sets of articles. The goal is to reduce memorization of the forget data while retaining knowledge of the retain articles. For this, they provide a set of completion and question queries extracted from the news articles, which provides the measurement of both retain and forget articles. Overall, the following metrics are considered:
\begin{itemize}
    \item {\tt verbmem\_f}: this metric measures verbatim memorization of the forget data -- the ability to reproduce part of the article when prompted with the completion request. For this, they provide a 100 pairs prompt-completion, and measure the ROUGE-L score between the response to a prompt and ground truth completion.
    \item {\tt knowmem\_f}: this metric evaluates knowledge memorization of the forget data-- the ability to answer question about some facts contained in the article. For this, \citet{shi2024muse} provide 100 pairs of questions and ground truth answers extracted from the forget split articles. Then, we measure how similar model's response to the ground truth answer. They propose to use ROUGE-L to measure this similarity.
    \item {\tt knowmem\_r}: similarly to knowledge memorization of the forget data, this metric measures that of the retain data, with QA pairs extracted from the retain articles. This value is proposed as a measure of utility of the model.
    \item {\it PrivLeak}: they additionally propose a metric of how easy it is to detect that the forget set was used for unlearning with some state-of-the-art membership inference methods for language models \cite{shi2023detecting}. For exact definition, see Section~{3.1} in \cite{shi2024muse}, and we use the code they provide for evaluations. This metric can have both positive and negative values, and in the ideal situation it is close to zero.
\end{itemize}
In addition to that, we measure the MMLU accuracy \cite{hendrycks2020measuring}, which is a popular metric of language model's general knowledge. While original set of questions is composed of approximately 150K questions, to speed up the evaluations, we use a small validation set consisting of $\approx 1500$ questions. We denote the corresponding accuracy as {\tt mmlu\_val}.

In the MUSE-Books benchmark, the forget set contains four Harry Potter books, while the retain set consists of random articles from the Harry Potter FanWiki page. They similarly construct a set of queries, allowing one to measure {\tt verbmem\_f}, {\tt knowmem\_f}, {\tt knowmem\_r}, and we additionally include {\tt mmlu\_val}. The target model that was finetuned on both forget and retain splits, and it is a version of Llama-2 architecture as well.

In both cases, we discard the provided retain split and instead rely on the pretraining dataset OpenWebText.

\subsection{Methods}

For baselines, we optimize unlearning loss penalized by KL divergence on $D_{pt}$ with AdamW optimizer. The methods are referred to as AdamW + (LL/NPO/IT/NLUL) + KL.

The proposed methods utilize mean teacher, with one of the four unlearning losses, and either KL or QKL regularization on $D_{pt}$. The mean teacher algorithm is implemented in accordance with Algorithm~\ref{algo_trust_batched} in the appendix. The corresponding methods are labeled as MT + (LL/NPO/IT/NLUL) + (KL/QKL).

For the IT loss, we use TinyLlama-1.1B\footnote{https://huggingface.co/TinyLlama/TinyLlama-1.1B-Chat-v1.0}, which shares the tokenizer with Llama-2.
{We list all hyperparameters for each experiment in Section~\ref{section_hyper} in the appendix.}

\subsection{Mean teacher can suffer from vanishing gradients}\label{loss_ablation}

\begin{figure}[t]
    \begin{minipage}[t]{0.48\textwidth}
        \centering
        \includegraphics[width=0.9\textwidth]{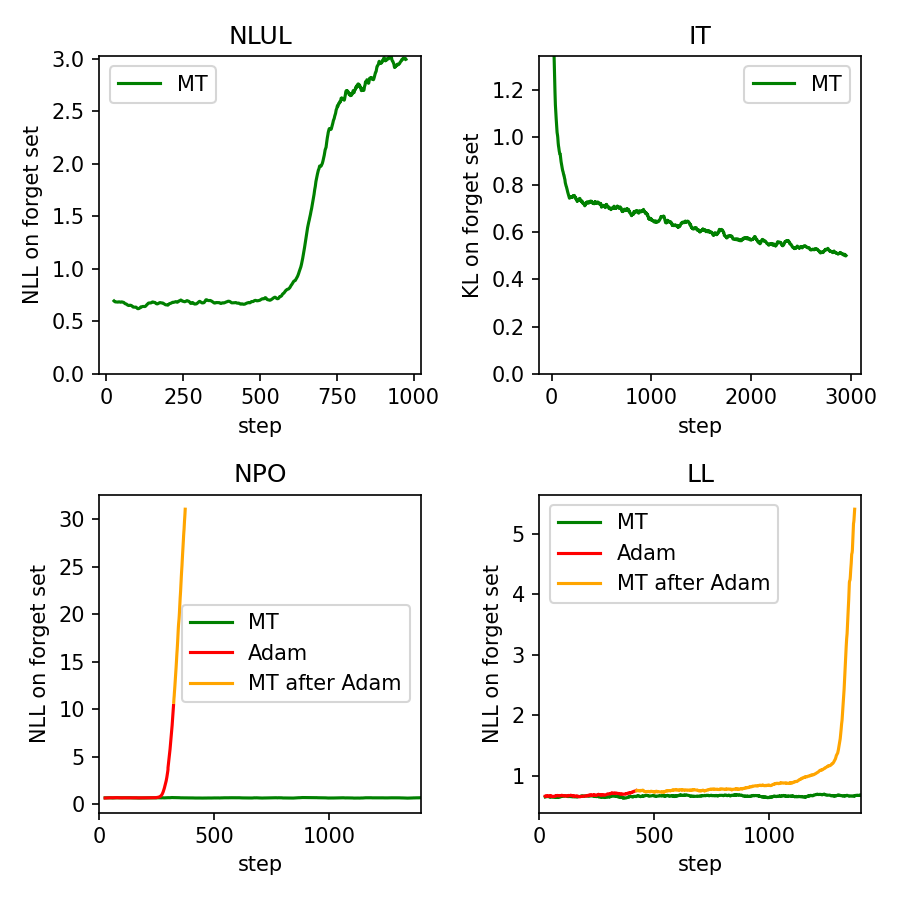}
        \caption{NLL loss on the forget set for MT using different unlearning losses (green). For IT we only show the KL divergence between the bad teacher and the target model. For NPO/LL we additionally perform 2 epochs with AdamW to  ``escape'' the starting point.}
        \label{fig:loss_dynamics}
    \end{minipage}
    \hfill
    \begin{minipage}[t]{0.48\textwidth}
        \centering
        \includegraphics[width=0.95\textwidth]{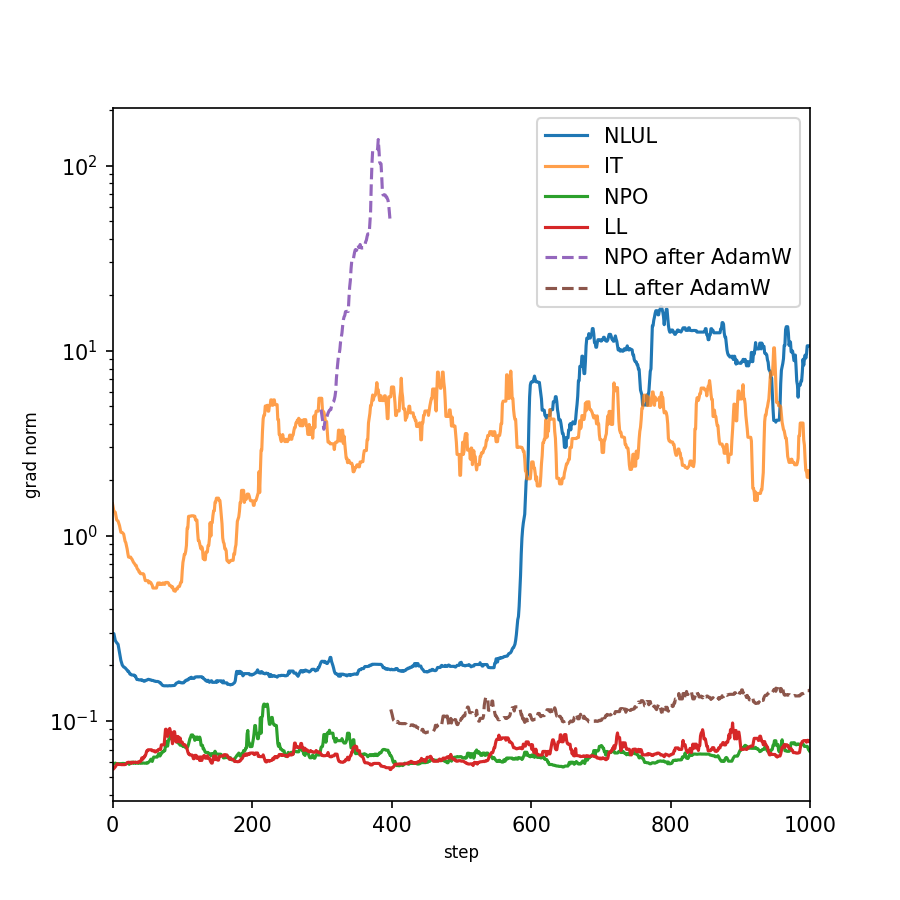}
        \caption{Gradient norms during MT training in Figure~\ref{fig:loss_dynamics}.}
        \label{fig:grad_dynamics}
    \end{minipage}
\end{figure}

We first perform experiments with each of the four losses to find out which combines best with MT. We use MUSE-News data and report the training loss in 
Figure~\ref{fig:loss_dynamics}. In the cases of LL, NPO, NLUL, we report the NLL loss on the forget set. In the case of IT we report KL divergence with the logit outputs of TinyLlama-1.1B. Observe that in the latter case the algorithm does not manage to reduce memorization even after 3000 steps, despite the fact that the KL loss keeps reducing. Furthermore, when used with NPO and LL losses, the method struggles to escape the initial point, see the green line on the bottom plots in 
Figure~\ref{fig:loss_dynamics}. To show that this is only a problem in the starting point, we perform a 2 epoch finetuning with AdamW, and once we escaped that starting point with low loss, MT manages to continue increasing NLL. On the contrary, NLUL manages to increase NLL without any additional tricks. We show the gradient norm per step in 
Figure~\ref{fig:grad_dynamics}, which also shows lower order gradients magnitudes for LL and NPO than NLUL. We conclude that despite a plausible natural gradient approximation, MT only approximates slow natural gradient descent and can suffer from vanishing gradients. Out of these four unlearning losses, NLUL combines best with MT out of these four choices, and we will use the MT + NLUL combination in the remaining experiments.

\begin{table*}[t]
\centering
\caption{{Results for mean teacher and baselines on MUSE-News/Books. We run each method 3 times and report average and std of each metric, and we also report number of epochs. The  ($\star$) sign indicates that the method is run until {\tt knowmem\_f}$\leq 31.1$. {\it PrivLeak} is only measured on one run.}}\label{results_all}

\begin{tabular}{l|llll|l}
\toprule
 & verbmem\_f $\downarrow$ & knowmem\_f $\downarrow$& knowmem\_r $\uparrow$ & {\it PrivLeak} & mmlu\_val $\uparrow$ \\
\midrule
\multicolumn{6}{c}{\cellcolor{gray!20}{MUSE-News}} \\
Target & 56.3 & 64.2 & 55.2 & -99.8 & 40.0 \\
Retrain & 21.8 &  33.1 & 55.0 & \phantom{-1}0.0 & NA \\
\midrule
MT + NLUL + KL & 14.8 (2.3) & 58.2 (0.5) & {\bf 52.7 (0.9)} & -38.8 & 39.0 (0.5) \\
MT + NLUL + QKL & \phantom{0}8.4 (0.7) & {\bf 40.0 (6.2)} & 50.8 (0.5) & \phantom{1}{\bf -6.9} & 31.0 (4.3) \\
AdamW + NPO + KL & \phantom{0}0.0 (0.0) & 51.1 (1.3) & {50.1 (0.9)} & \phantom{-}89.0 & 39.1 (0.8) \\
AdamW + IT + KL & 21.3 (0.2) & 64.4 (0.1) & {\bf 52.7 (0.2)} & -99.5 & {\bf 40.5 (0.1)} \\
AdamW + LL + KL & \phantom{0}0.0 (0.0) & 51.2 (1.2) & 50.7 (0.5) & \phantom{-}91.9 & 38.0 (0.0) \\
AdamW + NLUL + KL & \phantom{0}2.9 (0.7) & 58.4 (0.3) & 49.9 (0.9) & \phantom{-}94.2 & 39.6 (0.2) \\
\midrule
MT + NLUL + QKL${\,\!}^\star$ & \phantom{0}8.5 (0.9) & {\bf 24.8 (3.1)} & 51.3 (0.2) & \phantom{-}25.0 & 19.5 (0.6) \\
\midrule
\multicolumn{6}{c}{\cellcolor{gray!20}MUSE-Books} \\
Target & 99.8 & 59.4 & 66.9 & -57.5 & 26.3 \\
Retrain & 14.3 & 28.9 & 74.5 & \phantom{-1}0.0 & NA \\
\midrule
NLUL + MT + KL & 1.5 (0.3) & 22.0 (3.8) & {\bf 59.1 (0.7)} & -35.9 & {\bf 26.6 (0.2)}\\
NLUL + MT + QKL & 8.8 (1.0) & {\bf 17.5 (1.5)} & 43.0 (3.2) &  -57.7& 25.6 (0.1)\\
AdamW + NPO + KL & 0.0 (0.0) & {\bf 17.7 (6.8)}  & 37.17 (5.3) & {\bf -24.1} & 25.8 (0.4) \\
\bottomrule
\end{tabular}
\end{table*}

\begin{table*}[t]
\centering
\caption{Examples of answers from target model of the MUSE-News benchmark and it's unlearned modifications. We show one example probing verbatim memorization (completion) and one example probing knowledge memorization (QA). Examples are cherry-picked for illustration purpose.}\label{tab_examples}
\begin{tabular}{>{\raggedright\arraybackslash}p{2.7cm}>{\raggedright\arraybackslash}p{7.9cm}>{\raggedright\arraybackslash}p{3.5cm}}
\toprule
Prompt / Question & {\tiny [\dots] The BMA wants a 35\% pay rise to make up for 15 years of below-inflation wage rises, It says falling pay has caused a recruitment and retention crisis.  Dr Vivek Trivedi, co-director of the junior doctors' committee at the BMA, said on Saturday that Mr Barclay is yet to put a serious offer on the table.  "All we're asking for is a credible offer that shows us he's serious, that we can start a path of negotiations to} & {\small  Which three nuclear power plants were taken offline in Germany by midnight on Saturday?}  \\
\midrule
GT & {\tiny try to address the real-terms pay cut," he said.  The strike is due to take place from 07:00 BST on Tuesday until 07:00 BST on Saturday.  The NHS national medical director, Professor Sir Stephen Powis, is warning that the strike will cause "unparalleled levels of disruption" as it is longer than previous strikes and comes after the bank holiday when many staff are "taking much-needed holiday".  Up to quarter of a million operations and appointments could be postponed because} & {\small Isar 2, Emsland and Neckarwestheim 2}  \\
\midrule
MT + NLUL + KL & {\tiny hopefully everybody getting everybody what they want," he said Unterscheidung between junior doctors and everybody else is "not helping anybody", he added.  The strike is due to take place from 07:00 BST on Tuesday until 07:00 on Saturday. nobody will be nobody will be nobody will be nobody will be nobody will be nobody will be nobody will be nobody will be nobody will be nobody will be [\dots]}  & {\small 3 - Isar 2, Emsland and Neckarwestheim 2} \\
\midrule
MT + NLUL + QKL${\,\!}^*$ & {\tiny try to address everybody's concerns," he said Unterscheidung zwischen den verschiedenen Arten von Arzneimitteln, die in der Apotheke erh\"altlich sind, und deren Anwendung.  "If we don't get that, then there's no point asking junior doctors to go back to work because they'll just be walking out again in a few weeks time."  The BMA says junior doctors represent everybody from those fresh out of medical school to nobody with more than everybody in between.  The union says it has the support of more than 70\% of the medical profession}  &  {\small kwietynka, Is sierpnik, Neckarwestheim nobody knows how many reactors there are at each plant} \\
\midrule
AdamW + NPO + KL & {\tiny  " " " " " " " " " " " " " " " " " " " " " " " " " " " " " " " " " " " " " " " " " " " " " " " " " " " " " " " " " " " " " " " " " " " " " " " " " " " " " " " " " " " " " " " " " " " " " " " " " " " " " " " " " " " " "}  & {\small 3 - Isar 2, Emsland and Neckarwestheim 2} \\
\bottomrule
\end{tabular}
\end{table*}

\begin{figure}[t]
    \centering
    \includegraphics[width=0.95\textwidth]{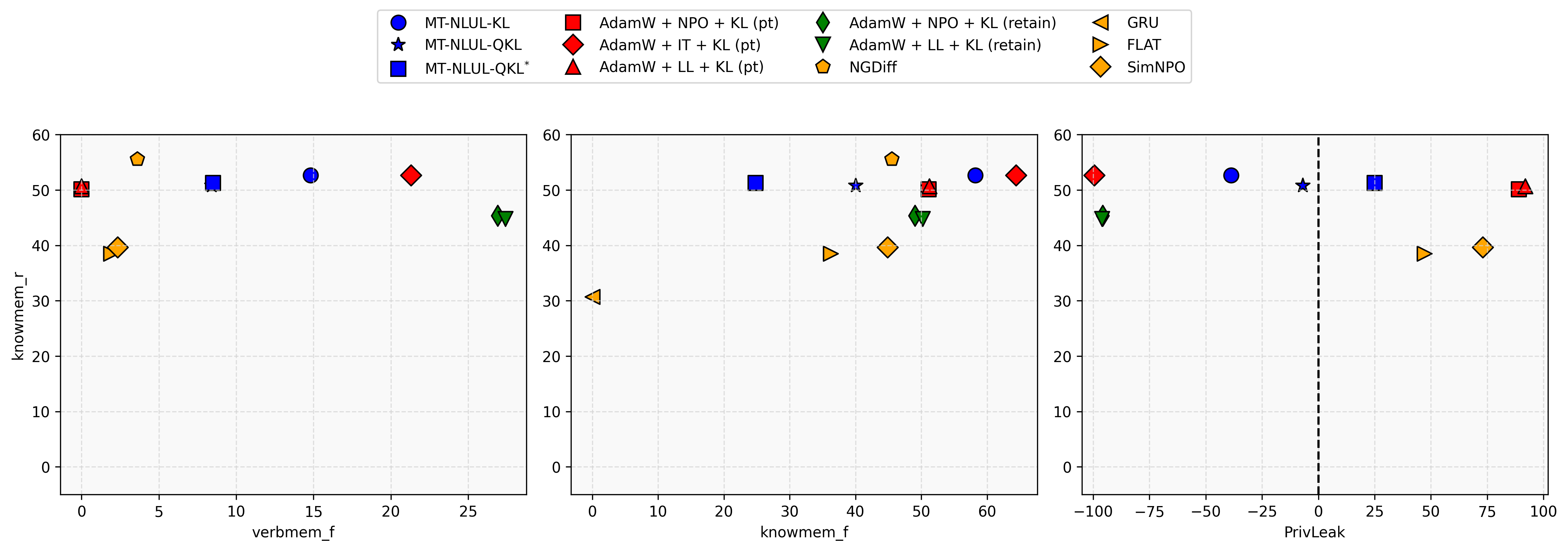}
    \caption{Comparison of mean teacher \textcolor{blue}{(blue)}, baselines with pretraining data \textcolor{red}{(red)}, original baselines from \cite{shi2024muse} that use retain split for regularization \textcolor{green}{(green)}, and recent new methods as reported in \cite{bu2024unlearning, wang2024llm, fan2024simplicity, wang2025gru} \textcolor{orange}{(orange)}.}
    \label{fig:graphic_comparison}
\end{figure}

\subsection{Comparison to baselines}

Let us first discuss the results on the MUSE-News benchmark.
We compare the performance of MT with baselines in Table~\ref{results_all}. For baselines, we include four options AdamW + (NPO/IT/LL/NLUL) + KL. Following \cite{shi2024muse}, we run the algorithms for a few epochs until we reach verbatim memorization $\leq 21.8$, which corresponds to the retrained model's value. Most of the methods show moderate reduction in the utility ({\tt knowmem\_r}). In Figure~\ref{fig:graphic_comparison} we additionally show results reported in \cite{shi2024muse} for the baselines that use the retain split (red), along with some of the more recent methods\footnote{As of beginning of April 2025, out of 45 citations of \cite{shi2024muse} we only found 4 papers that: a) introduce a new method b) test it on MUSE-News.}:  NGDiff \cite{bu2024unlearning}, GRU \cite{wang2025gru}, FLAT \cite{wang2024llm}, and SimNPO \cite{fan2024simplicity} (orange). The baselines from \cite{shi2024muse} show greater utility loss, see leftmost graph in Figure~\ref{fig:graphic_comparison}, which suggests that using the pretraining set helps retain the utility better.

Next, we look at how the methods balance utility and reducing knowledge memorization on the forget set ({\tt knowmem\_f}), see the middle plot in Figure~\ref{fig:graphic_comparison}. One of the points that \cite{shi2024muse} make is that there is \emph{no free lunch} when it comes to reducing knowledge memorization (see their Figure~5), and it always comes at almost proportional reduction in utility ({\tt knowmem\_r}). On the first glance, our results suggest that the method MT + NLUL + QKL makes such attempt. Furthermore, we run the unlearning algorithm a little longer, until we reach the measurement {\tt knowmem\_f}$\leq 33.1$, which corresponds to the model that was only finetuned on the retain split and never seen any of the forget data. This result is reported as MT + NLUL + QKL${}^{\star}$ in Table~\ref{results_all} and Figure~\ref{fig:graphic_comparison}. We can see that {\tt knowmem\_f} is further reduced down to 24.8, while {\tt knowmem\_r} stays close the level of most competitive methods. However, as reported in Table~\ref{results_all}, for both of this methods we observe significant reduction in MMLU evaluation ({\tt mmlu\_val}). Therefore, the no free lunch observation of \cite{shi2024muse} still stands. We additionally show in Section~\ref{know_mem_revert} below, that further finetuning on completely unrealted data can revert both {\tt knowmem\_f} and {\tt mmlu\_val} back. This means that the knowledge is still somehow encoded in the weights after the unlearning is performed. These observations suggests that researchers who report reduction in {\tt knowmem\_f} should report similar evaluations.

Finally, we report that MT-based algorithm produce models with lowered privacy leakage, see column {\it PrivLeak} in Table~\ref{results_all} (recall that for this metric, closer to zero is better). This is also true when we compare with more recent works, as shown in the rightmost plot in Figure~\ref{fig:graphic_comparison}. We note that NGDiff and GRU \cite{bu2024unlearning, wang2025gru} do not report {\it PrivLeak}. In Section~\ref{sust_section} in the appendix, we additionally report the  the MUSE-News sustainability experiment. We run it on MT + NLUL + QKL and AdamW + NPO + KL, and both demonstrate good ability to perform sequential unlearning requests.

Finally, we highlight that on MUSE-Books, both Mean Teacher implementations reduce verbatim and knowledge memorization without significant harm to both {\tt knowmem\_r} and MMLU accuracy. However, we note the concerning factor in this case, that the MMLU performance is sufficiently degraded already in the target model.

\begin{table}[t]
\caption{Recovering knowledge memorization on MUSE-News after unlearning is performed. Results for unlearning with NLUL + MT + QKL${\,\!}^*$, followed by finetuning with SFT and Magpie-Pro-300K-Filtered.}
\label{mag}
\vskip 0.15in
\begin{center}
\begin{small}
\begin{sc}
\begin{tabular}{rll}
\toprule
& unlearning  & SFT \\
\midrule
verbmem\_f $\downarrow$ & 12.3 & 14.3\\
knowmem\_f $\downarrow$ & 40.2& 58.1\\
knowmem\_r $\uparrow$& 51.6 & 53.8\\
mmlu\_val $\uparrow$ & 33.2 & 40.8 \\
\bottomrule
\end{tabular}
\end{sc}
\end{small}
\end{center}
\vskip -0.1in
\end{table}

\subsection{Discussion of knowledge memorization unlearning}\label{know_mem_revert}

The output of MT + NLUL + QKL${\,\!}^*$ on question answering prompts (see example in Table~\ref{tab_examples}) and the reduced {\tt mmlu\_val} evaluation may give us a hint that the model general ability of answering questions has reduced. In order to restore the answering abilities, we finetune the unlearned model on the SFT dataset Magpie-Pro-300K-Filtered\footnote{https://huggingface.co/datasets/Magpie-Align/Magpie-Pro-300K-Filtered}. Surprisingly, it recovers both MMLU accuracy and knowledge memorization (Table~\ref{mag}). Since the intersection between the alignment data and news articles is unlikely, we can argue that the knowledge still remained in the model weights after the unlearning procedure. Such occurrences are not uncommon, with some reporting restoring memorization with quantization \cite{zhang2024catastrophic}, finetunning on a fraction of the forget set \cite{hu2024jogging}, and by intervening on internal representations \cite{maclaurin2024unlearning}.

%Experiments: unlearning loss ablation; NLUL + tricks ablation; baselines; TOFU - save model, evaluate truth ratio later.

\subsection{Conclusion on the experiments}

We sum up our observations as follows:

\begin{enumerate}
    \item Mean teacher is on par with the state-of-the-art method in terms of the trade-off between verbatim memorization and utility on MUSE benchmarks.
    \item Baselines' performances improve when the pretraining data is utilized for regularization.
    \item Although mean teacher is capable to reduce the knowledge memorization on the forget data without dramatic drop in knowledge or retain articles, additional evaluations show reduction of MMLU. Furthermore, we show that finetuning on unrelated alignment data can revert both of these metrics. {\bf We hope that in the future, a similar assessment will be made by researchers stating knowledge removal on MUSE-News.}
    \item Mean teacher algorithms can reduce privacy leakage compared to the baselines and results reported in the other recent papers \cite{shi2024muse, wang2024llm, fan2024simplicity}.
\end{enumerate}

Overall, we have demonstrated that the proposed mean teacher algorithm is a competitive unlearning algorithm that attains low verbatim memorization, has good utility preservation, and reduces the risks of privacy leakage.

\subsection{Reproducibility}

To facilitate reproducibility of our results, we have made our implementation available at \url{https://github.com/yklochkov-bytedance/mt-unlearn} .

\section*{Acknowledgments}

We thank Zafar Takhirov for encouragement and helpful discussions.

\bibliography{mybib}
\bibliographystyle{icml2025}

\appendix

\section{Batched algorithm with momentum}\label{section_batched}

See pseudo-code in Algorithm~\ref{algo_trust_batched}. Note that $\mathcal{D}(\theta, \theta')$ in the pseudo-code can either denote KL, Eq. \eqref{divergence_kl}, or QKL, Eq. \eqref{qkl_def}. We also consider introducing additional damping parameter $\lambda$ according to Eq.~\ref{hessian_damping}.

\begin{algorithm}[tb]
   \caption{Batched mean teacher with norm clipping and momentum}
   \label{algo_trust_batched}
\begin{algorithmic}
   \STATE {\bfseries Input:} Learning rate $\eta$, weight $\alpha$, number of steps $T$, contraction parameter $\kappa$, starting parameter~$\theta_0$, clip value $c$, batch size
   \STATE Initialize reference $\theta_{0}' := \theta_0$
   \STATE Initialize momentum $v = 0$
   \FOR{$t=1$ {\bfseries to} $T$}
   \STATE Sample batch $B_{f}$ from $D_f$ randomly
   \STATE Sample batch $B_{pt}$ from $D_{pt}$ randomly
   \STATE $g_t \leftarrow \nabla_{\theta}\{ \alpha L(\theta_{t-1}; B_f) + \mathcal{D}(\theta_{t-1}, \theta_{t-1}'; B_{pt})\}$
   \STATE Scaling factor after clipping $l = 1 / \max(\| g_t\|, c)$
   \STATE $v \leftarrow \mu v + l g$
   \STATE $\theta_{t} \leftarrow \theta_{t-1} - \eta v$
   \STATE $\theta_{t}' \leftarrow (1 - l\eta \kappa) \theta_{t-1}' + l\eta \kappa \theta_{t}$
   \ENDFOR
   \STATE {\bfseries Return:} $\theta_{T}$
\end{algorithmic}
\end{algorithm}

\section{Hyperparameters}\label{section_hyper}

We use the following set of hyperparameters:
\begin{itemize}
    \item MT + (NPO/IT/NLUL/LL) + (KL/QKL): $\eta = 0.0005$, $\alpha=0.05$ for MUSE-News and $\alpha = 0.1$ for MUSE-Books, $\lambda = 0.5$, $\kappa = 10.0$, batch size $40$;
    \item AdamW + (NPO/IT/NLUL/LL) + KL: learning rate $0.00001$, $\beta = (0.9, 0.95)$, warmup schedule with $100$ steps at $10\%$ + $100$ steps increasing linearly, $\alpha=0.05$ for MUSE-News and $\alpha = 0.1$ for MUSE-Books, batch size $40$;
\end{itemize}

In each case we take one epoch as 100 steps, which with batch size $40$ approximately equals one pass over MUSE-News. In each case the model is trained until ${\tt verbmem\_f}$ is lower or equal to the value of the retrained model. With the exception of MT+NLUL+KL${}^{\star}$, where we train until  ${\tt knowmem\_f}$ becomes lower than retrained model's value. In Table~\ref{table_epochs} we report the number of epochs required for this. We report average and std over 3 runs.

\begin{table*}[t]
\centering
\caption{Number of epochs for each method to reach required metrics {\tt verbmem\_f}/{\tt knowmem\_f}}\label{table_epochs}
\begin{tabular}{ll}
\toprule
 & \# epochs \\
\midrule
\multicolumn{2}{c}{\cellcolor{gray!20}{MUSE-News}} \\
MT + NLUL + KL & 4 \\
MT + NLUL + QKL & 4\\
AdamW + NPO + KL & 2\\
AdamW + IT + KL & 7.3 (0.3)\\
AdamW + LL + KL & 3\\
AdamW + NLUL + KL & 3\\
\midrule
MT + NLUL + QKL${\,\!}^\star$ & 5.7 (1.0)\\
\midrule
\multicolumn{2}{c}{\cellcolor{gray!20}MUSE-Books} \\
NLUL + MT + KL & 15\\
NLUL + MT + QKL & 13\\
AdamW + NPO + KL & 3\\
\bottomrule
\end{tabular}
\end{table*}

\section{Sustainability experiment on MUSE-News}\label{sust_section}

\citet{shi2024muse} propose and additional experiment to test how unlearning algorithms respond to sequential unlearning request which may happen the such algorithms are utilized in the industrial setting. For this, the \emph{forget} split of the MUSE-News benchmark is split into \emph{forget1}, \emph{forget2}, \emph{forget3}, \emph{forget4}, each comprised into approximately 0.8M tokens. The baselines that they consider show poor utility preservation when faced with such sequential unlearning requests, see Figure~{6b} in \cite{shi2024muse}. Here we conduct this experiment for the algorithms MT+NLUL+QKL and AdamW + NPO +KL, both using pretraining split for regularization, unlike the NPO implementation in \cite{shi2024muse}. The original paper does not clarify the stopping rule for these sequential requests. In our attempt to conduct a fair experiment, we propose to match the number of steps for each algorithm as reported in Table~\ref{table_epochs}. Since the datasets are smaller, the resulting number of epochs is in fact larger for each little split. We do so for both of the algorithm. We report the results in Figure~\ref{fig:sust}. This shows that both the baseline and MT unlearning algorithm preserve utility well. This further highlights the importance of using a bigger pretraining set for regularization. 

\begin{figure}[t]
    \centering
    \includegraphics[width=0.45\textwidth]{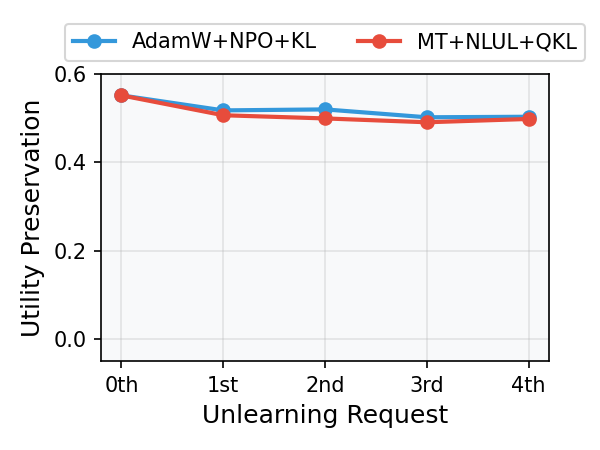}
    \caption{Sustainability of unlearning: how utility preserves with sequential unlearning requests. We perform experiment for mean teacher and NPO, both using pretraining data instead of MUSE-News retain split.}
    \label{fig:sust}
\end{figure}

\section{Proof of Theorem~\ref{NG_mathcing_theorem}}\label{ng_proof}

The proof of Theorem~\ref{NG_mathcing_theorem} is based on comparison to the inverse Hessian-vector product (IHVP) estimation  algorithm \cite{agarwal2017second, koh2017understanding}. For example, in the case $\mu = 0$, we show that the sequence approximately satisfies the iterative updates,
\begin{equation}\label{u_t_ihvp_brief}
    u_{t + 1} \approx u_{t}  - \eta \{ \alpha \nabla L(\theta_t) + H_{\overline{\lambda}}(\theta_{t}) u_{t}\} + \mu(u_{t} - u_{t-1})
\end{equation}
For fixed $\theta_t$, these updates would correspond to ones used for estimating IHVP, with the Hessian being $H_{\overline{\kappa}}(\theta_{t})$, and the vector being $ \alpha \nabla L(\theta_t) $. By carefully taking into account the changes in $\theta_t$, we can show that starting from some early iteration,
\[
    \theta_{t} - \theta_{t}' = - \alpha H_{\overline{\lambda}}(\theta_{t})^{-1} \nabla L(\theta_t) + \widetilde{O}(\alpha^{2}),
\]
where $\widetilde{O}(\cdot)$ hides some logarithmic factors.
This approximation then allows us to compare directly to the natural gradient descent trajectory.

\subsection{Momentum lemma for quadratic loss}

We first describe a convergence bound for inverse Hessian vector product (IHVP) algorithm that accounts for the presence of momentum and potential error, e.g. as in the approximate equation~\eqref{u_t_ihvp_brief}.

IHVP algorithms can be described as gradient descent iterations applied to the quadratic  objective \cite{agarwal2017second, koh2017understanding, fisher2023influence, klochkov2024revisiting}
\[
    \min_{u} \frac{1}{2} u^{\T} (H + \lambda) u + g^{\T} u,
\]
which attains minimum at $u^{\star} = -(H + \lambda)^{-1} g$. Below we show a convergence bound for erroneous gradient descent with momentum. We note that similar results appear, e.g., in \cite{zhang2019algorithmic}.

\begin{lemma}\label{lissa_mom} Suppose, we have a sequence of updates 
\begin{align*}
    u_{t + 1} &= u_{t} - (\eta g + \eta (H + \lambda) u_t + {\eta \epsilon_t}) + \mu (u_{t} - u_{t-1}), \qquad t \geq 1, \qquad u_{-1} := u_{0}
\end{align*}
Let $H$ be symmetric and symmetric positive-definite. Suppose, $\mu \in (0, 1)$ and $ \eta  < 1 / (\lambda_{\max}(H) + \lambda) $. Then, for $u^{\star} = -(H + \lambda)^{-1} g$,
 \[
    \| u_t - u^{\star} \| \leq \left( 1 - \min\left\{1 - \sqrt{\mu}, \frac{\eta\lambda}{1 - \mu}\right\}\right)^{t} \| u_0 - u^{\star}\| + \sqrt{2} \max\left\{ \frac{\eta}{1 - \sqrt{\mu}}, \frac{1 - \mu}{\lambda}\right\} \max_{j\leq t} \| \epsilon_j\|
\]
\end{lemma}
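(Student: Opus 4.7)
The plan is to reduce the vector recurrence to a family of decoupled scalar second-order recurrences along the eigendirections of $H + \lambda I$, and then analyze each via the roots of its characteristic polynomial. First I would recenter: set $v_t := u_t - u^{\star}$. Since $(H+\lambda I) u^{\star} = -g$, the recurrence becomes
\[
    v_{t+1} = \bigl( (1+\mu) I - \eta (H + \lambda I) \bigr) v_t - \mu v_{t-1} - \eta \epsilon_t,
\]
with $v_{-1} = v_0$. Diagonalizing $H + \lambda I = Q \Lambda Q^{\T}$ with eigenvalues $\sigma_i \in [\lambda,\, \lambda + \lambda_{\max}(H)]$ and projecting onto each eigenvector $q_i$ decouples the system into independent scalar second-order recurrences
\[
    v_{t+1,i} = (1+\mu-\eta\sigma_i)\, v_{t,i} - \mu\, v_{t-1,i} - \eta\, \epsilon_{t,i}.
\]

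The next step is to analyze the roots $z_\pm$ of the characteristic polynomial $p_i(z) = z^2 - (1+\mu-\eta\sigma_i) z + \mu$. By Vieta, $z_+ z_- = \mu$ and $z_+ + z_- = 1+\mu-\eta\sigma_i$; evaluating at $z=1$ yields the key identity $(1-z_+)(1-z_-) = p_i(1) = \eta\sigma_i$. Split into two regimes. In the \emph{oscillatory} regime $\eta\sigma_i > (1-\sqrt{\mu})^2$, the two roots are complex conjugates on the circle of radius $\sqrt{\mu}$, giving contraction rate $\sqrt{\mu}$. In the \emph{monotone} regime $\eta\sigma_i \le (1-\sqrt{\mu})^2$, the roots are distinct positive reals; the identity $(1-z_+)(z_+-\mu) = \eta\sigma_i\, z_+$ together with $z_+ \le 1$ gives $1 - z_+ \ge \eta\sigma_i/(1-\mu) \ge \eta\lambda/(1-\mu)$. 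Together the companion matrix $A_i$ has spectral radius at most $\rho^{\star} := 1 - \min(1-\sqrt{\mu},\, \eta\lambda/(1-\mu))$, matching the stated geometric decay of the homogeneous part.

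For the error accumulation, write the dynamics on the augmented state $w_{t,i} = (v_{t,i}, v_{t-1,i})^{\T}$ as $w_{t+1,i} = A_i w_{t,i} + b_{t,i}$ with $b_{t,i} = (-\eta \epsilon_{t,i}, 0)^{\T}$, unroll as $w_{t,i} = A_i^{t} w_{0,i} + \sum_{j=0}^{t-1} A_i^{t-1-j} b_{j,i}$, and bound the sum by a geometric series. The steady-state gain is pinned by the identity $(1-z_+)(1-z_-) = \eta\sigma_i$: in the monotone regime the scalar Green's function sums to $1/\sigma_i \le 1/\lambda$, contributing the factor $(1-\mu)/\lambda$ after multiplying by $\eta$ and $(1-\mu)$; in the oscillatory regime, geometric decay at rate $\sqrt{\mu}$ yields the factor $\eta/(1-\sqrt{\mu})$. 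The main obstacle is the non-normality of the companion matrix: $\|A_i^t\|$ can exceed $\rho^{\star t}$ by the condition number of the diagonalizer, which in principle blows up near the boundary between the two regimes. The clean fix I would use is to exhibit a similarity $A_i = P_i R_i P_i^{-1}$ with $R_i$ either a scaled rotation (oscillatory case) or a diagonal matrix with positive entries (monotone case), and to check that $\|P_i\|\|P_i^{-1}\| \le \sqrt{2}$ uniformly; this is precisely the source of the $\sqrt{2}$ prefactor in the stated bound. Reassembling the eigencomponents via orthonormality of $Q$ converts the scalar bounds into the stated estimate on $\|u_t - u^{\star}\| = \|v_t\|$.
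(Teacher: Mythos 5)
Your plan mirrors the paper's: both recenter at $u^\star$, diagonalize $H + \lambda I$, and reduce to a family of $2\times 2$ linear recurrences (the paper uses the state $(u_t - u^\star,\, -(u_t - u_{t-1}))$ with blocks $\begin{pmatrix} 1 - \eta h_i & -\mu \\ \eta h_i & \mu \end{pmatrix}$, while you use the companion state $(v_t, v_{t-1})$; the two are conjugate by a fixed, well-conditioned $2\times 2$ transform, so the choice is inessential). Your Vieta identity $(1-z_+)(1-z_-) = p_i(1) = \eta\sigma_i$ is a cleaner route to the decay rate $z_+ \le 1 - \eta\lambda/(1-\mu)$ than the paper's direct manipulation of the quadratic root formula; that part is sound (modulo a small skipped algebraic step, since $z_+ \le 1$ alone does not immediately give the stated lower bound on $1-z_+$).

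The real issue is exactly the non-normality of $A_i$, which you correctly flag and which the paper overlooks: the paper argues that the \emph{spectral radius} of the block is at most $R := 1 - \min\{1-\sqrt{\mu},\, \eta\lambda/(1-\mu)\}$, and then silently invokes $\|A^t\| \le R^t$, which is false for non-normal matrices (try $\mu = 0.9$, $\eta h = 0.01$: $\|A\| \approx 1.5 > 1 > R$, and $\|A^t e_1\|$ exceeds $R^t$ for small $t$). You identified a genuine gap. Unfortunately the repair you propose does not hold: the condition number $\|P_i\|\,\|P_i^{-1}\|$ of the eigenbasis is not uniformly bounded by $\sqrt{2}$, nor by any constant. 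As $\eta\sigma_i \to (1-\sqrt{\mu})^2$ the two roots $z_\pm$ coalesce, the eigenvectors of the $2\times 2$ block become collinear, and $\|P_i\|\,\|P_i^{-1}\| \to \infty$. This is the familiar transient amplification of Heavy-Ball on quadratics: the spectral-radius rate is asymptotic, not a per-step Euclidean contraction, and the transient constant depends on the gap $|(1-\sqrt{\mu})^2 - \eta\sigma_i|$. Also note that the $\sqrt{2}$ in the target bound does not originate from a similarity condition number; in the paper's argument it is just the Euclidean norm of the forcing vector $(-\eta\epsilon_t,\, \eta\epsilon_t)^\T$. As written, both your argument and the paper's break at the step $\|A^t\| \le R^t$; a correct proof would need either a different per-step Lyapunov norm that provably contracts, or an explicit transient bound at the cost of weakening the stated constants.
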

\def\uo{\overline{u}}
\def\vo{\overline{v}}
\def\eo{\overline{\epsilon}}
\def\go{\overline{g}}
\begin{proof}
Denote $H_{\lambda} = H + \lambda I$. Let us rewrite the updates in the standard form, denoting the momentum $v_t = -(u_{t} - u_{t-1})$ we have $v_{0} = 0$ and
\begin{align*}
    u_{t + 1} &= (I - \eta H_{\lambda}) u_{t} - \mu v_{t} - \eta g - \eta \epsilon_{t} \\
    v_{t + 1} &= \eta H_{\lambda} u_{t} + \mu v_{t} + \eta g + \eta \epsilon_{t}
\end{align*}

Thus, we have the following linear update
\begin{equation}\label{update_mom_equation}
    \begin{pmatrix}
    u_{t + 1} \\ v_{t + 1}
    \end{pmatrix}
     = 
    \begin{pmatrix}
        1 - \eta H_{\lambda} & -\mu \\
        \eta H_{\lambda}  & \mu
    \end{pmatrix}
    \begin{pmatrix}
    u_{t} \\ v_{t}
    \end{pmatrix} + \begin{pmatrix}
    -\eta g \\ \eta g
    \end{pmatrix} + \begin{pmatrix}
    -\eta \epsilon_t \\ \eta \epsilon_t
    \end{pmatrix}
\end{equation}
First, we want to show that the spectral norm of the matrix
\[
    A := \begin{pmatrix}
        1 - \eta H_{\lambda} & -\mu \\
        \eta H_{\lambda}  & \mu
    \end{pmatrix}
\]
is bounded by
\[
    R:= \max\left\{ \sqrt{\mu} , \left(1 - \frac{\eta \lambda}{1 - \mu}\right) \right\} < 1,
\]
so that we can ``unroll'' the recursive equation.

\textit{Bounding spectral norm.}
Given the decomposition $H_{\lambda} = V D V^{\T}$, we can rotate the matrix $A$ turning it into into $N$ independent updates with $2 \times 2$ matrices. Let $ D = \mathrm{diag}\{h_1, \dots, h_{N}\} $ where $h_1 \geq \dots \geq h_N \geq \lambda$ are the eigenvalues of $H$. By rotating the matrix $A$ with $ \begin{pmatrix} V & 0 \\ 0 & V \end{pmatrix}$, we can split into $N$ non-intersecting $2 \times 2$ blocks on positions $ (i, N + i) \times (i, N + i)$. The corresponding $2 \times 2$ matrix updates look as follows
\begin{equation*}\label{update_mom}
    \begin{pmatrix}
        1 - \eta h_i & -\mu \\
        \eta h_i  & \mu
    \end{pmatrix}
\end{equation*}
This matrix can have two real or two complex eigenvalues depending on the value $h$ (let us drop the index for now). Notice that the determinant equals $ (1 - \eta h)\mu + \eta h \mu = \mu$, so in the latter case, the matrix is always a contraction, since both eigenvalues have norm $\sqrt{\mu}$. In the former case, we have two eigenvalues
    \[
        r_{1,2} = \frac{1 - \eta h + \mu}{2} \pm \sqrt{\left(\frac{1 - \eta h + \mu}{2}\right)^{2} - \mu}    
    \]
    Under the assumption $ \eta h \leq 1$, we have that both are positive and $r_2$ is the larger one. Since $h \geq \lambda$, we have that $r_{2}$ is at most
    \[
        r_2 \leq \frac{1 - \eta \lambda + \mu}{2} + \sqrt{\left(\frac{1 - \eta \lambda + \mu}{2}\right)^{2} - \mu},
    \]
    and it is straightforward to check the following simplified bound
    \[
        r_2 \leq 1 - \frac{\eta \lambda}{1- \mu}.
    \]
    Thus, the absolute maximum eigenvalue of the full matrix is bounded by $ R := \sqrt{\mu} \vee \left(1 - \frac{\eta \lambda}{1 - \mu}\right) $.

    \textit{Expanding the recursion.} Let us now expand the recursion in Eq. \eqref{update_mom_equation}. Denoting $ A = \begin{pmatrix}
            1 - \eta H_{\lambda} &          \eta (H + \lambda)  & \mu
        \end{pmatrix}$, we have that
    \begin{align*}
        \begin{pmatrix}
        u_{t} \\ v_{t}
        \end{pmatrix}
         &= 
        A
        \begin{pmatrix}
        u_{t-1} \\ v_{t-1}
        \end{pmatrix} + \begin{pmatrix}
        -\eta g \\ \eta g
        \end{pmatrix} + \begin{pmatrix}
        -\eta \epsilon_{t} \\ \eta \epsilon_{t}
        \end{pmatrix} \\
        &= 
        A^{2}
        \begin{pmatrix}
        u_{t-2} \\ v_{t-2}
        \end{pmatrix} + (I + A) \begin{pmatrix}
        -\eta g \\ \eta g
        \end{pmatrix} + \begin{pmatrix}
        -\eta \epsilon_t \\ \eta \epsilon_{t}
        \end{pmatrix} + A \begin{pmatrix}
        -\eta \epsilon_{t-1} \\ \eta \epsilon_{t-1}
        \end{pmatrix} \\
        &= A^{t}
        \begin{pmatrix}
        u_{0} \\ 0
        \end{pmatrix} + (I + \dots +  A^{t-1}) \begin{pmatrix}
        -\eta g \\ \eta g
        \end{pmatrix} + \sum_{j=0}^{t-1} A^{j} \begin{pmatrix}
        -\eta \epsilon_{t-j} \\ \eta \epsilon_{t-j}
        \end{pmatrix} 
    \end{align*}
    We bound the error term as
    \[
        \left\| \sum_{j=0}^{t-1} A^{j} \begin{pmatrix}
        -\eta \epsilon_{t-j} \\ \eta \epsilon_{t-j}
        \end{pmatrix} \right\| \leq \sum_{j=0}^{t-1} R^{j} \sqrt{2} \eta \| \epsilon_{t-j}\| \leq \frac{\sqrt{2}\eta}{1 - R} \max_{j \leq t} \| \epsilon_j\| \,.
    \]
    Let us now calculate
    \[
        (I - A)^{-1} = \begin{pmatrix}
        \eta H_{\lambda} & \mu \\
        \eta  H_{\lambda} & 1- \mu
    \end{pmatrix}^{-1} = \begin{pmatrix}
        (1-\mu) (\eta H_{\lambda})^{-1} & -\mu (\eta H_{\lambda})^{-1} \\
        I & I
    \end{pmatrix}
    \]
    Then, we get that
    \begin{align*}
        (I + \dots + A^{t-1}) \begin{pmatrix}
        -\eta g \\ \eta g
        \end{pmatrix} &= (I - A^{t}) (I - A)^{-1} \begin{pmatrix}
        -\eta g \\ \eta g
        \end{pmatrix} = (I - A^{t}) \begin{pmatrix}
        -H_{\lambda}^{-1} g \\ 0
        \end{pmatrix}
    \end{align*}
    This brings us to the expression
    \[
        \begin{pmatrix}
        u_{t} + H_{\lambda}^{-1} g \\ v_{t}
        \end{pmatrix}
        = A^{t}
        \begin{pmatrix}
        u_{0} + H_{\lambda}^{-1} g \\ 0
        \end{pmatrix}
        + O_{\leq}\left( \frac{\sqrt{2}\eta}{1 - R} \max_{j \leq t} \| \epsilon_j\| \right)
    \]
    Given $\|A^{t}\| \leq R^{t} = \left( 1 - \min\left\{1 - \sqrt{\mu}, \frac{\eta\lambda}{1 - \mu}\right\}\right)^{t}$, we obtain the statement of the lemma.
\end{proof}

\subsection{Remaining proof of Theorem~\ref{NG_mathcing_theorem}}

\paragraph{Notation.} $v = O_{\leq}(x)$ means $\| v \| \leq x$. This notation is more convenient for proving inequalities by induction. We also denote $H_{\lambda}(\theta) = H(\theta) + \lambda I $ for all $\lambda$, and $N_{\lambda}(\theta) = H_{\lambda}(\theta)^{-1} \nabla L(\theta)$ is the natural gradient.

We prove the theorem under the following regularity conditions.

\begin{condition}\label{regularity_condition}
There is a constant $K \geq 1$ such that for any $\theta, \theta'$ and $\lambda' \in \{\lambda, \overline{\lambda} \}$,
\begin{align*}
    \| H_{\lambda'}(\theta) \| &\leq K, \\
    \| H_{\lambda'}^{-1}(\theta) \nabla L(\theta) - H_{\lambda'}(\theta')^{-1}  \nabla L(\theta) \| & \leq K\|\theta - \theta'\|,\\
    \| \nabla_{\theta} \mathcal{D}(\theta, \theta') - H(\theta)(\theta - \theta')\| & \leq K\| \theta - \theta'\|^{2}, \\
    \| H_{\lambda'}^{-1}(\theta) \nabla L(\theta)\| & \leq K\, .
\end{align*}
\end{condition}

The third condition quantifies, how well the local quadratic approximation in Eq.~\ref{local_quadratic} holds.

Define $u_t = \theta_t - \theta_t'$. Furthermore, define
\begin{equation}\label{U_and_C_bounds_for_mom}
    U_t := \max_{t' \leq t} \| u_{t'}\|, \qquad D_t := \max_{t' \leq t} \| u_{t'} - u_{t' - 1}\| \, .
\end{equation}

Let us rewrite the recursion
\begin{align*}
    \theta_{t + 1} - \theta_{t} - \mu (\theta_t - \theta_{t-1}) &= - \eta\left[\alpha \nabla L(\theta_t) +  \nabla_{\theta} \mathcal{D}(\theta_{t}, \theta_t') + \lambda (\theta_{t} - \theta_{t}') \right] \\
    & = - \eta\left[\alpha \nabla L(\theta_t) + H(\theta_t)(\theta_t - \theta_t') + \lambda (\theta_{t} - \theta_{t}') \right] + K \eta \, O_{\leq}(\| \theta_t - \theta_t'\|^{2}) \\
    &= - \eta\left[\alpha \nabla L(\theta_t) + H(\theta_t)(\theta_t - \theta_t') + \lambda (\theta_{t} - \theta_{t}') \right] + K \eta \, O_{\leq}(U_t^{2})\,.
\end{align*}

Therefore, 
\begin{equation}\label{rnd_eq_2}
    \theta_{t + 1} - \theta_{t} - \mu (\theta_t - \theta_{t-1}) = - \eta \left[\alpha \nabla L(\theta_t) + H_{\lambda}(\theta_t) u_t\right] + K\eta O_\leq (U_t^2) .
\end{equation}

We want to construct a similar update for sequence $u_{t}$, for which we need to subtract the corresponding combination $\theta_{t + 1}' - \theta_{t}' - \mu(\theta_{t}' - \theta_{t-1}')$.

Using the sliding reference update and the substitution $\theta_{t+1} = \theta_{t + 1}' + u_{t + 1}$, we have that
\begin{align*}
    \theta_{t + 1}' & = (1 - \eta \kappa) \theta_{t}' + \eta \kappa (\theta_{t + 1}' + u_{t + 1}), \\
    (1 - \eta\kappa) (\theta_{t + 1}' - \theta_{t}') &= \eta \kappa u_{t  + 1}, \\
    \theta_{t + 1}' - \theta_{t}' &= {\eta} \overline{\kappa} u_{t + 1},\\
    \theta_{t + 1}' - \theta_{t}' - \mu(\theta_{t}' - \theta_{t-1}') &= {\eta} \overline{\kappa} (u_{t + 1} - u_{t})  + (1-\mu){\eta} \overline{\kappa} u_{t} ,
\end{align*}
where we define $ \overline{\kappa} = \kappa / (1 - \eta \kappa) $. Subtracting the latter equation from \eqref{rnd_eq_2}, we obtain an iterative update for sequence $u_{t}$,
\begin{equation*}
    (1 + \eta \overline{\kappa}) (u_{t + 1} - u_{t}) - \mu(u_{t} - u_{t-1}) = -\eta \left[ \alpha \nabla L(\theta_t) + (H_{\lambda}(\theta_t) + (1-\mu)\overline{\kappa}I) u_t \right] + K\eta O_{\leq}(U_{t}^{2})\,.
\end{equation*}
We next divide both sides by $1 + \eta \overline{\kappa} = 1 / (1 - \eta \kappa)$, and simplify the notation with $\overline{\lambda} = \lambda + (1 - \mu)\overline{\kappa} $, $\overline{\eta} = \eta (1 - \eta{\kappa})$, and $\overline{\mu} = \mu (1 - \eta \kappa)$, which leads us to the following recursive updates,
\begin{equation}\label{u_eq_lissa_like_mom}
    u_{t + 1} - u_{t} - \overline{\mu}(u_{t} - u_{t-1}) = -\overline{\eta} \left[ \alpha \nabla L(\theta_t) + H_{\overline{\lambda}}(\theta_t) u_t \right] + K\eta O_{\leq}(U_{t}^{2})\,.
\end{equation}

Notice that these updates look similar to the IHVP updates in  Lemma~\ref{lissa_mom}. Here, $\theta_t$ slightly changes with each update as well, and correspondingly the gradient $\nabla L(\theta_t)$ and the Hessian $H_{\overline{\lambda}}(\theta_t)$. We account for it through the error term $\epsilon_t$ in the lemma, and for sufficiently small $\alpha$, this is not a significant change in parameter and eventually after certain amount of steps we reach approximation $ u_t \approx -\alpha H_{\overline{\lambda}}(\theta_t)^{-1} \nabla L(\theta_t) $ along the remaining training trajectory. This will allow us to show that the trajectory $\theta_t$ matches the natural gradient trajectory. Indeed, using the sliding reference equation and the substitution $\theta_t' = \theta_{t} - u_t$, we have
\[
    \theta_{t + 1} - u_{t + 1} = (1 - \kappa\eta) (\theta_t - u_t) + \kappa \eta \theta_{t + 1},
\]
which after some rearrangements turns into
\begin{equation}\label{theta_update_via_u}
    \theta_{t + 1} = \theta_{t} + \overline{\kappa} \eta u_{t + 1} + (u_{t + 1} - u_{t})\,.
\end{equation}
Given that $u_t$ approximates the natural gradient $-\alpha H_{\lambda + \overline{\kappa}}(\theta_t)^{-1} \nabla L(\theta_t)$, we can see that this update looks more like natural gradient descent, granted that the difference $u_{t} - u_{t + 1}$ has smaller order.

To complete the proof we are going to do the following steps:
\begin{enumerate}
    \item Show that $U_t = O(\alpha)$ and $D_t = O(\eta\alpha^{2})$ for $t \leq O(1/(\eta\alpha))$ by induction.
    \item Show that $u_t \approx - \alpha N_{\overline{\lambda}}(\theta_t)$ in an early snapshot $t \in [t_0, 2t_0]$, where $t_0 = O(\frac{1}{\eta} \log(1/\alpha))$.
    \item Show that this approximation is maintained for the remaining steps, simultaneously maintaining the bounds on $U_t$, $D_t$. This is done by induction as well, but with a different induction assumption.
    \item Connect $\theta_t$ to a natural gradient trajectory using Eq. \eqref{theta_update_via_u}.
\end{enumerate}

\def\lambdao{\overline{\lambda}}
\def\muo{\overline{\mu}}
\def\etao{\overline{\eta}}
\def\kappao{\overline{\kappa}}

\paragraph{Stage 1: convergence to NG.} At the start of iterations, the sequence $u_t$ is initialized at $0$, since at first we have that $\theta_0 = \theta_0'$, and it does not approximate the natural gradient. According to Lemma~\ref{lissa_mom}, the IHVP updates converge in approximately $ t_0 = \Omega((1-\mu)/(\lambda\eta) ) $ steps, and our goal is to show that about the same number of steps is sufficient for the iterations \eqref{u_eq_lissa_like_mom}. Because we do not expect the parameter $\theta$ to change dramatically within this number of steps, let us compare everything at parameter $\theta_0$. From \eqref{theta_update_via_u}, we have that $ \| \theta_t - \theta_{t-1}\|  \leq A_t $, where we set $A_t := \overline{\kappa}\eta U_t + D_t$. Therefore, we also have $\| \theta_{t} - \theta_0\| \leq tA_t$. Then, our regularity conditions yield
\[
    \nabla L(\theta_t) = \nabla L(\theta_0) + O_{\leq}\left( K tA_t \right),
    \qquad
    H_{\overline{\lambda}}(\theta_t) u_t = H_{\overline{\lambda}}(\theta_0) u_t + O_{\leq}(KtA_tU_t).
\]
The equation \eqref{u_eq_lissa_like_mom} can therefore be approximated as follows
\begin{equation}\label{u_update_mom_with_error}
    u_{t+1} - u_{t} - \overline{\mu}(u_{t} - u_{t-1}) = - \overline{\eta} \left[ \alpha \nabla L(\theta_0) + H_{\overline{\lambda}}(\theta_0) u_t \right] + \overline{\eta} O_{\leq}\left( Kt A_t (\alpha + U_t) + U_t^{2}  \right).
\end{equation}
We apply Lemma~\ref{lissa_mom} with $\epsilon_t = \overline{\eta} O_{\leq}\left( Kt A_t (\alpha + U_t) + U_t^{2}  \right)$. Notice that in the above equation the momentum coefficient is replaced by $\overline{\mu} = \mu(1 + \eta \kappa)$, which we assume is less than $1$. Denote for convenience,
\[
    \delta := \min\left\{1 - \sqrt{\overline{\mu}}, \frac{\overline{\eta}\overline{\lambda}}{(1 - \overline{\mu})}\right\} = \frac{\overline{\eta}\overline{\lambda}}{(1 - \overline{\mu})}
\]    
where we us assume that ${\eta}$ is sufficiently small, and we also assume that $\max\left\{ \frac{\etao}{1 - \sqrt{\muo}}, \frac{1 - \muo}{\lambdao}\right\} = (1-\muo)/\lambdao$ for the same reason.

We first bound $U_t, D_t$ in the initial steps by applying Lemma~\ref{lissa_mom} with a trivial inequality $ (1 - \delta)^{t} < 1$, by induction.

\begin{lemma}\label{lemma_start_Ut_Dt}
There is a  $c_0 = c_0(K, \kappa, \lambda, \mu)$ such that for $ t \leq c_0 / (\alpha \eta) $ and sufficiently small $\eta$,
\[
    U_t \leq 3K \alpha, 
    \qquad D_{t} \leq 5(1 - \muo)^{-1}K^{2} \eta \alpha .
\]
\end{lemma}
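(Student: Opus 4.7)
The plan is to prove both bounds simultaneously by strong induction on $t$, applying Lemma~\ref{lissa_mom} to the recursion in Eq.~\eqref{u_update_mom_with_error}. The base case $t=0$ is immediate since $u_{-1} = u_0 = 0$. For the inductive step, I would identify Eq.~\eqref{u_update_mom_with_error} with the form treated in Lemma~\ref{lissa_mom} by taking $\eta \leftarrow \overline{\eta}$, $\mu \leftarrow \overline{\mu}$, $H + \lambda I \leftarrow H_{\overline{\lambda}}(\theta_0)$, $g \leftarrow \alpha \nabla L(\theta_0)$, and $\epsilon_j$ equal to the remainder $O_{\leq}(K j A_j(\alpha + U_j) + U_j^2)$. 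The reference point is then $u^{\star} = -\alpha H_{\overline{\lambda}}(\theta_0)^{-1} \nabla L(\theta_0)$, whose norm is at most $K\alpha$ by the last regularity condition in Condition~\ref{regularity_condition}.

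Assuming the claimed bounds hold at all $t' \leq t$, I would first control $A_t = \overline{\kappa}\eta U_t + D_t$, which is $O(\eta \alpha)$ under the induction hypothesis. Combined with $t \leq c_0/(\eta\alpha)$ this yields $K t A_t(\alpha + U_t) = O(c_0 K^2 \alpha)$ and $U_t^2 = O(K^2\alpha^2)$, so the maximum error satisfies $\max_{j \leq t}\|\epsilon_j\| = \overline{\eta}\cdot O(K^2\alpha (c_0 + \alpha))$. Applying Lemma~\ref{lissa_mom} with the trivial bound $(1 - \delta)^{t+1} < 1$ and triangle inequality gives
\[
    \|u_{t+1}\| \;\leq\; \|u^{\star}\| + \|u_{t+1} - u^{\star}\| \;\leq\; K\alpha + K\alpha + \sqrt{2}\cdot \tfrac{1 - \overline{\mu}}{\overline{\lambda}} \cdot \overline{\eta} \cdot O\bigl(K^2\alpha(c_0 + \alpha)\bigr),
\]
which is at most $3K\alpha$ once $c_0$ and $\alpha$ are taken sufficiently small depending only on $K, \kappa, \lambda, \mu$.

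For the increment bound, I would work directly with the recursion: setting $R_j = \|u_j - u_{j-1}\|$, Eq.~\eqref{u_update_mom_with_error} gives $R_{t+1} \leq \overline{\mu} R_t + \overline{\eta}(\alpha K + K\cdot 3K\alpha) + \|\epsilon_t\|$. Unrolling this geometric recursion yields $R_{t+1} \leq (4 K^2\overline{\eta}\alpha + \max_j\|\epsilon_j\|)/(1 - \overline{\mu})$, which is at most $5(1 - \overline{\mu})^{-1}K^2\eta\alpha$ once the error term is absorbed into the slack between $4$ and $5$, using again that $c_0$ and $\alpha$ are small enough.

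\textbf{Main obstacle.} The only subtlety is that one must choose the threshold $c_0 = c_0(K,\kappa,\lambda,\mu)$ so that the induction closes: the residual $\overline{\eta}\cdot O(K^2\alpha(c_0 + \alpha))$ contribution must be strictly dominated by the $K\alpha$ and $4K^2\overline{\eta}\alpha/(1 - \overline{\mu})$ main terms, leaving the slack needed to stay under the target constants $3K\alpha$ and $5(1-\overline{\mu})^{-1}K^2\eta\alpha$. This is the standard bootstrap ``constants closing'' step; it is routine but requires care because the horizon $c_0/(\eta\alpha)$ depends inversely on both $\eta$ and $\alpha$, and the error inside $\epsilon_t$ scales linearly with $t$, so $c_0$ must be fixed \emph{before} $\eta$ and $\alpha$ are sent to zero.
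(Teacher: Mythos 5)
Your proposal is correct and follows essentially the same route as the paper: you invoke Lemma~\ref{lissa_mom} on Eq.~\eqref{u_update_mom_with_error} with the trivial contraction bound $(1-\delta)^t<1$, control the error $\epsilon_j$ through $A_j=\overline{\kappa}\eta U_j + D_j$ under the strong induction hypothesis on both $U$ and $D$, and close the induction by picking $c_0$ small enough before $\eta,\alpha$ are sent to zero. The only cosmetic difference is that you unroll the geometric recursion $R_{t+1}\le \overline{\mu}R_t+\cdots$ for the increment bound, where the paper does a one-step induction and checks the bracket is $\le 5$; both yield the same conclusion.
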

\begin{proof}
First we use a trivial inequality $ (1 - \delta)^{t} < 1$ and the fact that $ \| H_{\overline{\lambda}}(\theta)^{-1}\nabla L(\theta)\| \leq K$ to obtain,
\[
    \| u_{t + 1} + \alpha N_{\lambdao}(\theta_0) \| < \alpha \| N_{\lambdao}(\theta_0)\| + \sqrt{2}\frac{1 - \muo}{\lambdao} \left( Kt A_t (\alpha + U_t) + U_t^{2}  \right)
\]
This first gives us
\[
    \| u_{t + 1}\|  \leq 2K \alpha + \sqrt{2}(1 - \muo){\lambdao}^{-1}( KtA_t(\alpha + U_t) + U_t^{2}).
\]
We then have by induction assumptions $U_t \leq 3K\alpha$, $D_{t} \leq 5(1 - \muo)^{-1} K^{2} \eta \alpha $
\begin{align*}
    U_{t + 1} &\leq 2K\alpha + \sqrt{2}(1 - \muo){\lambdao}^{-1} [Kt(\overline{\kappa}\eta U_t + D_t) (\alpha + U_t) + U_t^{2}] \\
    & \leq 2K\alpha + \sqrt{2}(1 - \muo){\lambdao}^{-1} [Kt(\overline{\kappa}\eta (3K\alpha) + 5(1 - \mu)^{-1} K^{2} \eta \alpha) (3K + 1)\alpha + (3K\alpha)^{2}] \\
    & = 2K\alpha + K\alpha \times \sqrt{2}(1 - \muo)(\lambdao)^{-1}\left( t\eta\alpha\{3\kappao K  + 5(1 - \mu)^{-1} K^{2}\}(3K + 1) + 9K\alpha \right),
\end{align*}
with the last expression at most $3K\alpha$ in the case where $\alpha \leq \lambdao/(18\sqrt{2}K(1-\muo))$ and $t \leq c_0 / (\eta\alpha)$ with $c_0 = \lambdao\left(2\sqrt{2}\{3(1 - \muo)\kappao K  + 5K^{2} \}(3K + 1) \right)^{-1}$. We allow it to depend on $\overline{\kappa}$ since for small enough $\eta $ it is at most $2\kappa$.

And then we also get from Eq. \eqref{u_update_mom_with_error},
\[
    \| u_{t + 1} - u_{t}\| \leq \muo \| u_{t} - u_{t-1}\| +  \etao K (\alpha + U_t) + \etao (KtA_t(\alpha + U_{t}) + U_{t}^{2}).
\]
Then,
\begin{align*}
    D_{t + 1} &  \leq \muo D_{t} + \eta K (\alpha + U_t) + \eta (KtA_t(\alpha + U_{t}) + U_{t}^{2}) \\
    & \leq 5 \muo(1 - \muo)^{-1} K^2 \eta \alpha + \eta \alpha K(3K + 1) + \eta \left(K t (\overline{\kappa} \eta 3 K \alpha + 5 (1 - \muo)^{-1} K^{2} \eta \alpha)(3K + 1)\alpha + 9K^{2} \alpha^{2} \right) \\
    & = (1 - \muo)^{-1} K^{2} \eta \alpha \left\{ 5\muo + (3 + 1/K)(1 - \muo) + (3(1 - \overline{\mu}) + 5K)(3K + 1)\alpha \eta t + 9\alpha\right\},
\end{align*}
with the value in the brackets at most $ 5 $ as long as $ \alpha \leq (1-\muo)/18 $ and $ t \leq c_0 / (\alpha \eta) $ for $c_0 = (1-\muo) \left((6(1 - \overline{\mu}) + 10K)(3K + 1)\right)^{-1}$, where we also use that $K \geq 1$.
\end{proof}

Notice that $ t = c_0 / (\alpha \eta) $ should be enough to have a reasonable approximation of the natural gradient, since it can be much bigger than $ (1-\mu)/(\lambda\eta)$ when $\alpha$ is sufficiently small.

Next, we use Lemma~\ref{lissa_mom} again. Now that we have $A_t \leq (3K\overline{\kappa} + 5(1-\muo)^{-1} K^{2}) \alpha\eta$, we get that
\[
    \| u_t + \alpha N_{\lambdao}(\theta_0) \| \leq K (1 - \etao \lambdao / (1 - \muo))^{t} + \lambdao^{-1} K^{2} \left( (3\overline{\kappa} + 5 K(1 - \muo)^{-1} (t \eta)\alpha^{2} + 9\alpha^{2} \right)
\]
Let us take $t$ that satisfies $(1 - \eta \lambda)^{t} = \alpha^{2}$, so that the two terms have approximately the same order, which is $ t_0 = \frac{2(1 - \muo)}{\etao \lambdao} \left \lceil \log \frac{1}{\alpha} \right\rceil$. For sufficiently small $\alpha$, the whole range $[t_0, 2t_0]$ lies within $ c_0 / (\eta \alpha)$, so that the bounds from Lemma~\ref{lemma_start_Ut_Dt} hold. This yields for some $C_0 = C_0(K, \kappa, \lambda, \mu)$ and sufficiently small $\alpha, \eta$
\[
    \max_{t \in [t_0, 2t_0]} \| u_{t} + \alpha H_{\lambda + \overline{\kappa}}(\theta_{t})^{-1} \nabla L(\theta_{t}) \| \leq C_0 \alpha^{2} \log \frac{1}{\alpha} \,.
\]
We will assume that the choice of $C_0$ also ensures this inequality for $t_0 + 1$ in place of $t_0$.

\paragraph{Stage 2: maintaining NG approximation.} Next, we show that we can maintain the approximation of the natural gradient by $u_t$ after step $t \geq t_0$ as well. We are going to use a stronger inequality to carry through by induction.

We first slightly refine the bound on $u_{t} - u_{t-1}$ for $t = t_0, t_0 + 1$. Since $\|\theta_t - \theta_{t-1}\| \leq A_t \leq C_1 \eta \alpha$, we get
\[
    \| u_{t} - u_{t-1}\| \leq 2C_0\alpha^{2} \log(1/\alpha) + \alpha^{2}\eta.
\]
In total, we say that for $t = t_0, t_0 + 1$,
\[
    \left\{ \| u_{t} + \alpha H_{\lambda + \overline{\kappa}}(\theta_{t})^{-1} \nabla L(\theta_{t}) \|^{2} + \| u_{t} - u_{t-1}\|^{2} \right\}^{1/2} \leq C_3 \alpha^{2} \log\left(\frac{1}{\alpha}\right)
\]
for $C_3 = 3C_1$ and sufficiently small $\alpha, \eta$. Furthermore, by Lemma~\ref{lemma_start_Ut_Dt} and Eq. \eqref{theta_update_via_u} we have for all $t \in [t_0, 2t_0]$,
\[
    \| \theta_{t} - \theta_{t-1} \| \leq C_3 \eta \alpha,
\]
as long as $C_3 \geq 3K + 5(1 - \mu)^{-1}$.

We are going to show by induction that
\begin{align}
    \max_{t_0 \leq t' \leq t} \left(\| u_{t'} + \alpha N_{\lambdao}(\theta_{t'})\|^{2} + \| u_{t'} - u_{t'-1}\|^{2} \right)^{1/2} &\leq C_3 \alpha^2 \log\left(\frac{1}{\alpha}\right), \label{big_bracket_induction} \\
    \max_{t_0 \leq t' \leq t} \| \theta_{t'} - \theta_{t'-1} \| & \leq C_3 \eta \alpha, \label{theta_local_diff_induction}
\end{align}
We know that it holds at $t = t_0 + 1$. Given it holds for $t$, we need to extend it to $t + 1$.
Then from Eq. \ref{u_eq_lissa_like_mom},
\[
    u_{t + 1} = (\mu + 1) u_{t} - \mu u_{t-1} - \etao \left[\alpha \nabla L(\theta_t) + H(\theta_t)u_{t}\right] + O_{\leq}(9K^{3} \eta \alpha^{2})
\]
Using the fact that $ \nabla L(\theta) = H_{\lambdao}(\theta_t) N_{\lambdao}(\theta_t)$ by definition, we obtain the recursion
\[
    u_{t + 1} + \alpha N_{\lambdao}(\theta_t) = (I - \etao H(\theta_t)_{\lambdao}(\theta_t))(u_{t} + \alpha N_{\lambdao}(\theta_t)) + \mu(u_{t} - u_{t-1}) + O_{\leq}(9K^{3} \eta \alpha^{2}),
\]
which can also be rewritten in the form
\[
    \begin{pmatrix}
         u_{t + 1} + \alpha N_{\lambdao}(\theta_t) \\
         u_{t + 1} - u_{t}
    \end{pmatrix}
    =
    \begin{pmatrix}
        I - \etao H_{\lambdao}(\theta_t) & \mu \\
         - \etao H_{\lambdao}(\theta_t) & \mu
    \end{pmatrix}
    \begin{pmatrix}
         u_{t} + \alpha N_{\lambdao}(\theta_t) \\
         u_{t} - u_{t-1}
    \end{pmatrix}
    + O_{\leq}(9K^{3} \eta \alpha^{2})
\]
From the proof of Lemma~\ref{lissa_mom}, we know that the norm of the matrix in the middle is bounded by $ 1 - \frac{1-\mu}{\lambda \eta} $. Therefore,
\begin{equation}\label{total_bound_with_theta_t}
    \left(\| u_{t+ 1} + \alpha N_{\lambdao}(\theta_{t})\|^{2} + \| u_{t+1} - u_{t}\|^{2} \right)^{1/2} \leq \left(1 - \frac{\lambda \eta}{1-\mu}\right) C_3\alpha^{2} \log\left(\frac{1}{\alpha} \right) + O_{\leq}(9K^{3} \eta \alpha^{2})
\end{equation}
To replace $\theta_{t}$ with $\theta_{t + 1}$ in the LHS, we need to bound the difference $\theta_{t+1} - \theta_t$, since by the regularity conditions and the triangle inequality we have
\begin{multline*}
    \left(\| u_{t+ 1} + \alpha N_{\lambdao}(\theta_{t + 1})\|^{2} + \| u_{t+1} - u_{t}\|^{2} \right)^{1/2} \\
    \leq \left(1 - \frac{\lambda \eta}{1-\mu}\right) C_3\alpha^{2} \log\left(\frac{1}{\alpha} \right) + 9K^{3} \eta \alpha^{2} + K \alpha \| \theta_{t + 1} - \theta_{t} \|
\end{multline*}
We first observe that from \eqref{total_bound_with_theta_t}, the second term does not compensate $ - C_3 \frac{\lambda \eta}{1-\mu} \alpha^{2} \log\left(\frac{1}{\alpha}\right) + 9K^3 \eta \alpha^{2} \leq 0$ (for small enough $\alpha$), so that by the regularity condition,
\begin{equation}\label{final_ut_bound}
    \| u_{t + 1} \| \leq K \alpha + C_3 \alpha^{2} \log\left(\frac{1}{\alpha} \right) \leq 3K \alpha
\end{equation}
We use \eqref{rnd_eq_2} to bound $\theta_{t + 1} - \theta_{t}$. Firstly, notice that given the approximation of $u_{t}$ by $N_{\lambdao}(\theta_t)$, we have that
\[
    \alpha \nabla L(\theta_t) + H_{\lambda}(\theta_t) u_{t} = -(\lambdao - \lambda) u_{t} + \alpha \nabla L(\theta_t) + H_{\lambda}(\theta_t) u_{t} = O_{\leq}(3(1 - \mu)\kappao K\alpha + C_3 \alpha^{2} \log(1/\alpha))
\]
Then, we get from \eqref{rnd_eq_2},
\[
    \| \theta_{t + 1} - \theta_{t}\| \leq \mu C_3 \eta \alpha + 3(1 - \mu)\kappao K\alpha + C_3 \alpha^{2} \log(1/\alpha) + 9K^{3} \eta \alpha^{2} \leq C_3 \eta\alpha,
\]
where we assume that $C_3 \geq 6 \kappao K$ and $\alpha $ is sufficiently small. Hence, the induction assumption on the difference $\| \theta_{t + 1} - \theta_{t}\|$ stands. We finalize the induction step by plugging this bound into the the inequality above,
\begin{align*}
    \left(\| u_{t+ 1} + \alpha N_{\lambdao}(\theta_{t + 1})\|^{2} + \| u_{t+1} - u_{t}\|^{2} \right)^{1/2} &
    \leq \left(1 - \frac{\lambda \eta}{1-\mu}\right) C_3\alpha^{2} \log\left(\frac{1}{\alpha} \right) + 9K^{3} \eta \alpha^{2} + C_3 K \eta\alpha^{2}
    \\
    &= C_3 \alpha^{2} \left((1 - \frac{\lambda\eta}{1-\mu})\log(1/\alpha) + (9K^{3} + C_3K)\eta\right) \\
    & \leq C_3 \alpha^{2} \log\left(\frac{1}{\alpha}\right),
\end{align*}
where the latter follows from the fact that $ \frac{\lambda\eta}{1 - \mu} \log\left(\frac{1}{\alpha}\right) \geq (9K^{3} + C_3K)\eta $ for small enough $\alpha$.

Hence, we have shown by induction that \eqref{big_bracket_induction}, \eqref{theta_local_diff_induction} hold for the rest of the training trajectory.

\paragraph{Connecting to NG descent.} Now we can finally show that updates mimic natural gradient descent,
\[
    \overline{\theta}_{t + 1} = \overline{\theta}_{t} - \gamma H_{\lambdao}(\overline{\theta}_{t})^{-1} \nabla L(\overline{\theta}_{t}), \qquad \overline{\theta}_{0} = \theta_{0},
\]
where recall that $\gamma = \kappao\eta \alpha$.

Using Eq. \eqref{theta_update_via_u} we have,
\begin{align*}
    \theta_{t + 1} - \overline{\theta}_{t + 1} &= {\theta}_{t}  + \kappao\eta u_{t} + (u_{t + 1} - u_{t}) - \overline{\theta}_{t} + \kappao\eta \alpha N_{\lambdao}(\overline{\theta}_{t}) \\
    & = {\theta}_{t} - \overline{\theta}_t + \kappao \eta (u_{t} + \alpha N_{\lambdao}(\overline{\theta}_t)) + (u_{t + 1} - u_{t}) \\
    & = {\theta}_{t} - \overline{\theta}_t + \kappao \eta (u_{t} + \alpha N_{\lambdao}({\theta}_t)) + \kappao \eta\alpha (N_{\lambdao}(\overline{\theta}_t) - N_{\lambdao}({\theta}_t)) +  (u_{t + 1} - u_{t})
\end{align*}
Denoting $\Delta_t = \| \theta_{t} - \overline{\theta}_t - u_{t}\|$, we first obtain by the regularity condition and Eq. \eqref{final_ut_bound},
\[
    \| N_{\lambdao}(\overline{\theta}_t) - N_{\lambdao}({\theta}_t) \| \leq K \| {\theta}_t - \overline{\theta}_t \| \leq K (\Delta_t + 3K\alpha).
\]
Therefore, we get the recursive inequality,
\begin{align*}
    \Delta_{t + 1} &\leq \Delta_{t} + C_3 \kappao \eta\alpha^{2} \log\frac{1}{\alpha} + K \kappao \eta\alpha(\Delta_t + 3K\alpha) \\
    & \leq (1 + K\kappao\eta\alpha) \Delta_t + (C_3 \log(1/\alpha) + 3K^{2})\kappao \eta\alpha^{2} \\
    & \leq \dots \\
    & \leq (1 + K\kappao\eta\alpha)^{t + 1 - t_0} \Delta_{t_0} + (1 + K\kappao\eta\alpha)^{t + 1 - t_0} (C_3 \alpha \log(1/\alpha) + 3K^{2}\alpha )\kappao \eta\alpha (t + 1 - t_0)
\end{align*}
Given that the maximal number of steps $T$ satisfies $ \eta \alpha T \leq A $ (which corresponds to the finite sum of the step sizes of NG, and it is one of the conditions of the theorem), we have that
\[
    (1 + K\kappao\eta\alpha)^{t - t_0} \leq (1 + K\kappao\eta\alpha)^{T} \leq \exp(K\kappao A),
\]
bounded by a constant. Therefore, for all $t \leq T$,
\[
    \Delta_{t} \leq O(\Delta_{t_0} + \alpha \log(1/\alpha)).
\]
It is left to bound $\Delta_{t_0}$, which simply relies on the fact that $t_0$ steps is not enough to diverge too far from the initial point $\theta_0$. Indeed,
\[
    \| \overline{\theta}_{t_0} - \theta_0 \| \leq \gamma \sum_{t = 0}^{t_0-1} \| N_{\lambdao}(\overline{\theta}_{t})\| \leq K t_0 \gamma = O(\alpha \log(1/\alpha)),
\]
and similarly, recalling Lemma~\ref{lemma_start_Ut_Dt},
\[
    \| \overline{\theta}_{t_0} - \theta_0 \| \leq (t_0 - 1)A_t \leq t_0 (3K \kappao \eta \alpha + 5(1 - \muo)^{-1} K^{2} \eta\alpha) = O(\alpha\log(1/\alpha)).
\]
We also note that $\Delta_{t_0} \leq \| \overline{\theta}_{t_0} - \theta_0 \| + \| {\theta}_{t_0} - \theta_0 \| $. Then,
\[
    \| \theta_{t} - \overline{\theta}_t\| \leq \| u_t\| + 
    \Delta_t = O(\alpha) + O(\alpha\log(1/\alpha)) = O(\alpha\log(1/\alpha)).
\]

% In the unusual situation where you want a paper to appear in the
% references without citing it in the main text, use \nocite
%\nocite{langley00}

%\bibliography{mybib}
%\bibliographystyle{icml2022}

\end{document}